\newtheorem{thm}{Theorem}[]
\newtheorem{rem}{Remark}[]
\definecolor{dark-red}{rgb}{0.4,0.15,0.15}
\definecolor{dark-blue}{rgb}{0,0,0.7}
\DeclareMathOperator*{\argmin}{arg\,min}
\DeclareMathOperator*{\argmax}{arg\,max}
\icmltitlerunning{Representations and Exploration for Deep Reinforcement Learning using Singular Value Decomposition}
\begin{document}

\twocolumn[
\icmltitle{Representations and Exploration for Deep Reinforcement Learning \\ using Singular Value Decomposition}



\icmlsetsymbol{equal}{*}

\begin{icmlauthorlist}
\icmlauthor{Yash Chandak}{um}
\icmlauthor{Shantanu Thakoor}{dm}
\icmlauthor{Zhaohan Daniel Guo}{dm}
\icmlauthor{Yunhao Tang}{dm}
\\
\icmlauthor{Remi Munos}{dm}
\icmlauthor{Will Dabney}{dm}
\icmlauthor{Diana Borsa}{dm}
\end{icmlauthorlist}

\icmlaffiliation{um}{University of Massachusetts}
\icmlaffiliation{dm}{DeepMind}

\icmlcorrespondingauthor{Yash Chandak}{ychandak@cs.umass.edu}

\icmlkeywords{Machine Learning, ICML}

\vskip 0.3in
]



\printAffiliationsAndNotice{} 


\begin{abstract}
Representation learning and exploration are among the key challenges for any deep reinforcement learning agent.
In this work, we provide a singular value decomposition-based method that can be used to obtain representations that preserve the underlying transition structure in the domain.
Perhaps interestingly, we show that these representations also capture the relative frequency of state visitations, thereby providing an estimate for pseudo-counts for free.
To scale this decomposition method to large-scale domains, we provide an algorithm that never requires building the transition matrix, can make use of deep networks, and also permits mini-batch training.
Further, we draw inspiration from predictive state representations and extend our decomposition method to partially observable environments.
With experiments on multi-task settings with partially observable domains, we show that the proposed method can not only learn useful representation on DM-Lab-30 environments (that have inputs involving language instructions, pixel images, and rewards, among others) but it can also be effective at hard exploration tasks in DM-Hard-8 environments.
\end{abstract}
\section{Introduction}

Developing reinforcement learning (RL)  methods to tackle problems that have complex observations, partial observability, and sparse reward signals is an active research direction.
Such problems provide interesting challenges because the observations can often contain multi-modal information consisting of visual features, text, and voice. 
Coupled with partial observability, an agent must look at a sequence of past observations and learn useful representations to solve the RL task.
Additionally, when the reward signal is sparse, exploring exhaustively becomes impractical and careful consideration is required to determine which parts of the environment should be explored.
That is, efficient exploration depends on an agent's representation.
Therefore, a unified framework to tackle both representation learning and exploration can be beneficial for tackling challenging problems. 

Approaches for representation learning often include auxiliary tasks for predicting future events, either directly \citep{jaderberg2016reinforcement} or in the latent space  \citep{guo2020bootstrap,guo2022byol,schwarzer2020data}. 
However, in the rich observation setting, fully reconstructing observations may not be practical, while reconstruction in the latent space is prone to representation collapse.
Alternatively, auxiliary tasks using contrastive losses can prevent representation collapse \citep{chen2020simple,srinivas2020curl}, but can be quite sensitive to the choice of negative sampler \citep{zbontar2021barlow}.

Similarly, approaches targetting exploration often include reward bonuses based on counts  \citep{bellemare2016unifying}, or reconstruction errors \citep{schmidhuber1991possibility,pathak2017curiosity}. While these ideas are appealing, in rich-observation settings enumerating counts can be challenging and reconstruction can focus on irrelevant details, resulting in misleading bonuses. Methods like RND \citep{burda2018exploration} partially address these concerns but are completely decoupled from any representation learning method. This substantially limits the opportunity for targeted exploration in the agent's model of the environment.    
A detailed overview of other related work is deferred to Section \ref{sec:related} and Appendix \ref{sec:apx:related}.

In this work we provide De-Rex: \underline{de} composition-based \underline{r}epresentation and \underline{ex}ploration, an approach that builds upon singular value decomposition (SVD) to train \textit{both} a useful state representation and a pseudo-count estimate for exploration in rich-observation/high-dimensional settings.

\textbf{Structure preserving representations: } De-Rex uses a decomposition procedure to learn a state representation that preserves the transition structure without ever needing to build the transition matrix (Section \ref{sec:tabular}). Further, our method requires only access to transition samples in a mini-batch form and is compatible with deep networks-based decomposition (Section \ref{sec:fa}).
This improves upon prior work which used decompositions but was restricted to tabular settings \citep{mahadevan2005proto,mahadevan2007proto} or required symmetric matrices (e.g., Laplacian  \cite{wu2018laplacian}) that could corrupt
the underlying structure when the transition kernel is asymmetric.

\textbf{Pseudo-counts for exploration: } We show that the norms of these learned state representations capture relative state visitation frequency, thereby providing a very cheap procedure to obtain pseudo-counts (Sections \ref{sec:tabular} and \ref{sec:fa}). This significantly streamlines the exploration procedure in contrast to prior works that required learning exploratory options using spectral decomposition of the transition kernel \citep{machado2017laplacian,machado2017eigenoption}.

\textbf{Performance on large-scale experiments: } Building upon ideas from predictive state representations \citep{littman2001predictive,singh2003learning}, De-Rex also provides a procedure to tackle representation learning and exploration in partially observable environments (Section \ref{sec:PSR}).
 We demonstrate the effectiveness of De-Rex on DM-Lab-30 \citep{beattie2016deepmind} and  DM-Hard-8 exploration tasks \citep{paine2019making}, in a multi-task setting (\textit{without} access to task label). These domains are procedurally generated, partially observable, and have multi-modal observations involving language instructions, pixel images, and rewards (Section \ref{sec:results}).

\section{Preliminaries}
We will begin in a Markov decision process (MDP) setting and develop the key insights using singular value decomposition (SVD).
Later in Section \ref{sec:PSR}, we will extend the ideas to  partially observable MDPs (POMDPs).

Any vector $x$ is treated as a column vector. Any matrix $\bf M$ is denoted using bold capital letters, and ${\bf M}_{[i,j]}$ correspond to $i^\text{th}$ row and $j^\text{th}$ column of $\bf M$.
An MDP is a tuple $(\mathcal S, \mathcal A, p, r, d_0)$, where $\mathcal S$ is the set of finite states, $\mathcal A$ is a set of finite actions, $r$ is the reward function, $p$ is the transition function, and $d_0$ is the starting-state distribution. 
A policy $\pi$ is a decision making rule, and let ${\bf P_\pi} \in \mathbb R^{|\mathcal S| \times |\mathcal S|}$ be the transition matrix induced by $\pi$ such that ${\bf P_\pi}_{[s,s']} \coloneqq \sum_{a \in \mathcal A}\Pr(s'|s,a)\pi(a|s)$.
Let the performance of a policy $\pi$ be $J(\pi) \coloneqq \mathbb E_\pi\left[\sum_{t=0}^\infty \gamma^t R_t\right]$, where $\gamma \in [0,1)$ and $R_t$ is the reward observed at time $t$ on interacting with the environment with $\pi$.
Let $\Pi$ be a class of (parameterized) policies. The goal is to find $\argmax_{\pi \in \Pi} J(\pi)$.


\subsection{Why SVD?}

\begin{figure}[t]
    \centering
    \includegraphics[width=0.2\textwidth]{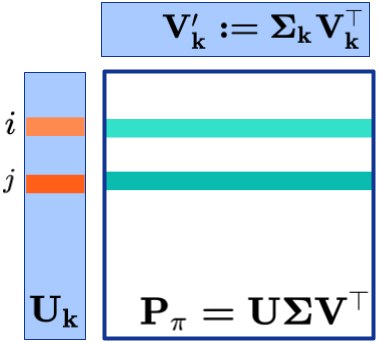}
    \caption{
    Consider top-k SVD decomposition of ${\bf P_\pi} \approx {\bf U_k V_k'}$, where ${\bf V_k'} \coloneqq {\bf \Sigma_k V_k}^\top \in \mathbb R^{k \times |\mathcal S|}$. After decomposition, we consider the rows of ${\bf U_k}\in \mathbb R^{|\mathcal S|\times k}$ to be the $k \ll |\mathcal S|$ dimensional representation for the states. Intuitively, if two states have similar next state transition distributions (\textcolor{teal}{teal} rows ${\bf P_{\pi}}_{[i]}$ and ${\bf P_{\pi}}_{[j]}$) then     their representations (\textcolor{orange}{orange} rows ${\bf U_{k}}_{[i]}$ and ${\bf U_{k}}_{[j]}$) would also be similar.
    Notice that rows in $\bf U_k$ can be thought of as coefficients for combining the common basis vectors $\bf V_k'$ to best estimate rows of $\bf P_{\pi}$   (i.e., ${\bf P_\pi}_{[i]} \approx {\bf U_{k}}_{[i]} {\bf V_k'}$,
    and  ${\bf P_{\pi}}_{[j]} \approx {\bf U_{k}}_{[j]} {\bf V_k'}$). 
    Therefore, if ${\bf P_{\pi}}_{[i]} \approx {\bf P_{\pi}}_{[j]}$,
    then ${\bf U_{k}}_{[i]} {\bf V_k'} \approx {\bf U_{k}}_{[j]} \bf V_k'$.
    }
    \label{fig:svd}
\end{figure}
Following the seminal work on proto-value functions \citep{mahadevan2005proto,mahadevan2007proto},  representations obtained from spectral decomposition  have  been shown to be effective not only towards minimizing the Bellman error \citep{behzadian2019fast} but also stabilizing off-policy learning \citep{ghosh2020representations}.
Of particular interest are recent works by \citet{lyle2021effect,lyle2022learning} that show under idealized conditions that\textit{ state features }emerging from value function estimation, using just TD-learning, capture the top-k subspace of the eigenbasis of ${\bf P_\pi}$.
As we show in the following, these top-k eigenvectors 
of ${\bf P_\pi}$ are \textit{exactly} the same as the top-k eigenvectors of $({\bf I} - \gamma {\bf P_\pi})^{-1}$.
Therefore, these basis vectors can be  particularly useful for obtaining state representations as the value function is linear in $({\bf I} - \gamma {\bf P_\pi})^{-1}$  \citep{sutton2018reinforcement}. 
That is, $v^\pi = ({\bf I} - \gamma {\bf P_\pi})^{-1}R_\pi$, where $R_\pi \in \mathbb 
R^{ |\mathcal S| }$ is a vector with one-step expected reward ${\bf E}_\pi[R|s] $ for all the states $s \in \mathcal S$.
%

\begin{thm}
\thlabel{thm:ebf}
    If the eigenvalues of ${\bf P_\pi} \in \mathbb R^{|\mathcal S|\times |\mathcal S|}$
    are real and distinct,
    \footnote{we consider eigenvalues to be real and distinct to simplify the choice of `top-k' eigenvalues.} then for any $k$, the top-k eigenvectors of  ${\bf P_\pi}$ and $({\bf I} - \gamma {\bf P_\pi})^{-1}$ are the same. 
\end{thm}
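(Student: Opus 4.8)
The plan is to exploit that $({\bf I} - \gamma {\bf P_\pi})^{-1}$ is a matrix function of ${\bf P_\pi}$, so the two matrices share an eigenbasis, and then to argue that the induced reordering of eigenvalues is monotone, hence preserves the notion of `top-k'. First I would fix an eigenpair $(\lambda, v)$ of ${\bf P_\pi}$, i.e. ${\bf P_\pi} v = \lambda v$, and compute $({\bf I} - \gamma {\bf P_\pi}) v = (1 - \gamma\lambda) v$. Since ${\bf P_\pi}$ is row-stochastic its spectral radius is at most $1$, so $|\lambda| \le 1$, and with $\gamma \in [0,1)$ and $\lambda$ real we get $1 - \gamma\lambda \ge 1 - \gamma > 0$, in particular nonzero; this simultaneously shows $({\bf I} - \gamma {\bf P_\pi})$ is invertible and lets me invert the relation to obtain $({\bf I} - \gamma {\bf P_\pi})^{-1} v = (1 - \gamma\lambda)^{-1} v$. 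Thus every eigenvector of ${\bf P_\pi}$ is an eigenvector of $({\bf I} - \gamma {\bf P_\pi})^{-1}$, with its eigenvalue transformed by $g(\lambda) \coloneqq (1 - \gamma\lambda)^{-1}$.

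Next I would use the hypothesis that the eigenvalues are real and distinct. Distinctness guarantees that ${\bf P_\pi}$ is diagonalizable with a one-dimensional eigenspace per eigenvalue, so the eigenvectors are unique up to scaling and the phrase `top-k eigenvectors' (the eigenvectors attached to the $k$ largest eigenvalues) is unambiguous for both matrices. Because the correspondence $v \leftrightarrow v$ is a bijection between the shared eigenbasis, it remains only to check that $g$ preserves the ordering of the eigenvalues, so that the $k$ largest eigenvalues of ${\bf P_\pi}$ correspond precisely to the $k$ largest eigenvalues of $({\bf I} - \gamma {\bf P_\pi})^{-1}$.

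For the ordering I would show that $g$ is strictly increasing on the relevant range. Differentiating gives $g'(\lambda) = \gamma (1 - \gamma \lambda)^{-2}$, which is strictly positive whenever $\gamma > 0$ (using $1 - \gamma\lambda > 0$ from the first step), so a larger $\lambda$ maps to a larger $g(\lambda)$; the degenerate case $\gamma = 0$ makes $({\bf I} - \gamma {\bf P_\pi})^{-1} = {\bf I}$, for which every vector is an eigenvector and the claim holds vacuously. Monotonicity then matches the ordered eigenvalues index-by-index, and since each matched pair shares the same eigenvector, the top-k eigenvectors of the two matrices coincide. I expect the only real subtlety to be justifying $1 - \gamma\lambda > 0$ (hence that the resolvent is well-defined and $g$ is finite and monotone), which is exactly where the stochasticity of ${\bf P_\pi}$ together with $\gamma < 1$ is needed; the remainder is bijection-plus-monotonicity bookkeeping.
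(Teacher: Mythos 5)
Your proposal is correct and follows essentially the same route as the paper: both establish that every eigenvector of ${\bf P_\pi}$ is an eigenvector of $({\bf I}-\gamma{\bf P_\pi})^{-1}$ with eigenvalue $1/(1-\gamma\lambda)$, and then note that this map is monotone increasing in $\lambda$ so the top-$k$ ordering is preserved. The only (cosmetic) difference is that you obtain the shared eigenvector by applying $({\bf I}-\gamma{\bf P_\pi})$ to $v$ and inverting, whereas the paper expands the resolvent as a Neumann series $\sum_t \gamma^t {\bf P_\pi}^t$; your version is if anything slightly tidier, since it makes the invertibility condition $1-\gamma\lambda>0$ and the monotonicity of $g(\lambda)=(1-\gamma\lambda)^{-1}$ explicit rather than asserted.
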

Proofs for all the results are deferred to  Appendix \ref{apx:unbconsproof}.
These lead to a natural question: \textit{Can we explicitly decompose ${\bf P_\pi}$ to aid representation learning?} 
To answer this, \thref{thm:ebf} can be useful from a practical standpoint, as estimating and decomposing ${\bf P_\pi}$ can be much easier than doing so for $({\bf I} - \gamma {\bf P_\pi})^{-1}$. 
Unfortunately, however, eigen decomposition of ${\bf P_\pi}$ need not exist in general when ${\bf P_\pi}$ is not diagonalizable, for example, if ${\bf P_\pi}$ is not symmetric. 

Therefore, we instead consider singular value decomposition (SVD), which generalizes eigen decomposition, of ${\bf P_\pi}$ and always exists \citep{van1976generalizing}.
That is, there always exists orthogonal matrices
${\bf U} \in \mathbb R^{|\mathcal S|\times |\mathcal S|}$, ${\bf V}\in \mathbb R^{|\mathcal S|\times |\mathcal S|}$, and a real-valued diagonal matrix ${\bf \Sigma} \in \mathbb R^{|\mathcal S|\times |\mathcal S|}$ such that  ${\bf P_\pi} = {\bf U\Sigma V^\top}$.
In Figure \ref{fig:svd}, we provide some intuition for what the top-k SVD decomposition of ${\bf P_\pi}$ captures: ${\bf P_\pi} \approx {\bf U_k}{\bf \Sigma_k V_k}^\top$, where ${\bf U_k}\in \mathbb R^{|\mathcal S|\times k}$ and ${\bf V_k}\in \mathbb R^{|\mathcal S|\times k}$ are the top-k left and right singular vectors corresponding to the top-k singular values ${\bf \Sigma_k}\in \mathbb R^{k\times k}$.  
We build upon this insight to develop an SVD-based scalable method that goes beyond just representation learning and can also be useful to obtain intrinsic bonuses to drive exploration.

Outside of RL, different matrix decomposition methods such as Kernel PCA  \citep{scholkopf1997kernel}, deep decomposition \citep{de2021survey,De_Handschutter_2021}, and EigenGame  \citep{gemp2020eigengame,deng2022neuralef} have also been proposed.
 In what follows, we draw inspiration from these but focus on the challenges for the RL setup, i.e., asymmetric (transition) matrices, construction of exploration bonuses, and partial observability.

\section{Tabular Setting}
\label{sec:tabular}
\begin{figure*}[t]
    \centering
    \includegraphics[width=0.23\textwidth]{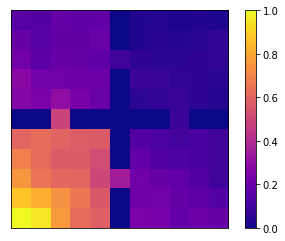}
    \includegraphics[width=0.2\textwidth]{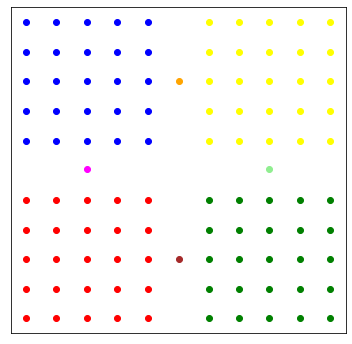}
    \includegraphics[width=0.21\textwidth]{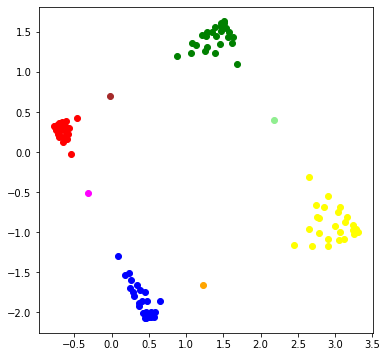}
    \includegraphics[width=0.23\textwidth]{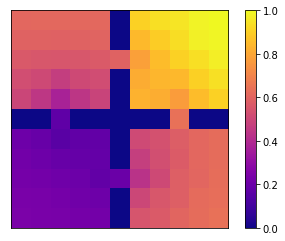}
    \caption{\textbf{(A)} State visitation frequency in the data collected using a random policy. \textbf{(B)} Reference colors for the states in the 4rooms. \textbf{(C)} 2D Representations learned from 121-dimension tabular representation. \textbf{(D)} Bonus constructed using the learned representations.  Legend: values are normalized between $0$ and $1$ and a darker color represents a lower value. } 
    \label{fig:tabular4room}
\end{figure*}

To convey the core insights and lay the foundations for our approach, we will first consider the tabular setting.
In Section \ref{sec:fa}, we will extend these insights to develop scalable methods that use function approximations.

Decomposing ${\bf P_\pi}$ exactly would require transitions corresponding to all the states.
This is clearly impractical, as an agent would rarely have access to all the possible states.
Therefore, we consider decomposing a weighted version of ${\bf P_\pi}$.
Let $\mathcal D \coloneqq \{(s,a,r,s')_i \}_{i=1}^n$ be a dataset of $n$ transition tuples collected using $\pi$.
Let $d \in \Delta(\mathcal S)$ be the distribution of the states $s$ from which the observed transition tuples are sampled, and let ${\bf D} \coloneqq \texttt{diag}(d) \in \mathbb R^{|\mathcal S| \times |\mathcal S|}$.
In the following, we will consider decomposing the weighted matrix ${\bf D}{\bf P_\pi}$ (when $d$ is uniform, ${\bf D}{\bf P_\pi} \propto {\bf P_\pi}$).
As we will elaborate in the following sections, this choice of decomposing $\bf D P_\pi$ has two major advantages: \textbf{(a)} It will provide a practical representation learning procedure that only requires sampling observed transition tuples, and \textbf{(b)} it will also provide, almost automatically, estimates of pseudo-counts that will be useful for exploration.

\textbf{Representation Learning: } 
In this section, we outline the key idea for representation learning.  
Before proceeding, we introduce some additional notation.
 Let $x_{s} \in \mathbb R^{|\mathcal S|}$ be the one-hot encoding corresponding to state $s$, and let ${\bf X} \in \mathbb R^{n\times |\mathcal S|}$ be the stack of $x_{s}$ for the states $s$ in the dataset $\mathcal D$. Similarly, let $y_{s} \in \mathbb R^{|\mathcal S|}$ be the one-hot encoding corresponding to $s'$ and let ${\bf Y} \in \mathbb R^{n\times |\mathcal S|} $ be the matrix consisting the stack of $s'$ in the dataset $\mathcal D$.

Using the dataset $\mathcal D$ we provide a sample estimate for ${\bf D}{\bf P_\pi}$ that we would like to decompose.
While we will \textit{never} actually construct this, a possibly intractable, estimate of ${\bf D}{\bf P_\pi}$, it will be useful for deriving the proposed procedure.
Thus, let
\begin{align}
    {\bf A_n} \coloneqq \frac{1}{n} {\bf X}^\top {\bf Y} = \frac{1}{n}\sum_{i=1}^n x_i y_i ^\top. \label{eq:xy}
\end{align}
\begin{thm}
\thlabel{thm:unbcons}
    ${\bf A_n}$ is an unbiased and a consistent estimator of ${\bf D}{\bf P_\pi}$, i.e.,
    \begin{align}
       \forall i,j, \quad \mathbb E\left[{\bf A_n}_{[i,j]}\right] = {\bf D}{\bf P_\pi}_{[i,j]}, && {\bf A_n}_{[i,j]} \overset{\text{a.s.}}{\longrightarrow} {\bf D}{\bf P_\pi}_{[i,j]}.
    \end{align}
\end{thm}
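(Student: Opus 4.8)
The plan is to prove the claim entrywise, reducing it to a statement about averages of i.i.d.\ indicator random variables. Fix any pair of indices $(k,l)$. By definition of the one-hot encodings, the $(k,l)$ entry of the rank-one term $x_i y_i^\top$ is $[x_i]_k [y_i]_l$, which equals $1$ exactly when the $i$-th transition tuple has starting state $s_i = k$ and next state $s'_i = l$, and is $0$ otherwise. Hence I would introduce the indicator $Z_i^{(k,l)} \coloneqq \mathbbm{1}\{s_i = k, s'_i = l\}$ and observe that ${\bf A_n}_{[k,l]} = \frac{1}{n}\sum_{i=1}^n Z_i^{(k,l)}$ is simply the empirical frequency of the transition $k \to l$ in the dataset $\mathcal{D}$.

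The key computation is the expectation of a single term. Since each tuple is drawn by first sampling $s_i \sim d$ and then $s'_i \sim {\bf P_\pi}_{[s_i, \cdot]}$, I would factor the joint probability as
\begin{align}
\mathbb{E}\left[Z_i^{(k,l)}\right] = \Pr(s_i = k, s'_i = l) = \Pr(s_i = k)\,\Pr(s'_i = l \mid s_i = k) = d(k)\, {\bf P_\pi}_{[k,l]} = {\bf D}{\bf P_\pi}_{[k,l]},
\end{align}
where the last equality uses ${\bf D} = \texttt{diag}(d)$ being diagonal. Unbiasedness then follows immediately by linearity of expectation: averaging $n$ terms each with this mean leaves the mean unchanged, giving $\mathbb{E}[{\bf A_n}_{[k,l]}] = {\bf D}{\bf P_\pi}_{[k,l]}$.

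For consistency, I would invoke the strong law of large numbers. The random variables $\{Z_i^{(k,l)}\}_{i=1}^n$ are i.i.d.\ (each tuple in $\mathcal{D}$ is sampled independently from the same generating process) and bounded in $[0,1]$, hence have finite mean. The SLLN then yields ${\bf A_n}_{[k,l]} = \frac{1}{n}\sum_{i=1}^n Z_i^{(k,l)} \overset{\text{a.s.}}{\longrightarrow} \mathbb{E}[Z_1^{(k,l)}] = {\bf D}{\bf P_\pi}_{[k,l]}$. Since $(k,l)$ was arbitrary and there are finitely many index pairs, the convergence holds simultaneously for every entry.

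The steps above are all routine once the probabilistic model generating $\mathcal{D}$ is pinned down, so the only real obstacle is the independence assumption underlying both the expectation factorization and the SLLN. If the tuples were harvested from contiguous trajectories rather than drawn as independent $(s,s')$ pairs with $s \sim d$, the $Z_i^{(k,l)}$ would be correlated, and I would instead need an ergodic (Markov-chain) law of large numbers, with $d$ taken as the stationary distribution of ${\bf P_\pi}$, to recover almost-sure convergence; the per-sample unbiasedness would still hold as long as the marginal law of each $s_i$ is $d$.
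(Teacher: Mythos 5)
Your proof is correct and follows essentially the same route as the paper: the consistency argument (entrywise indicators plus the strong law of large numbers applied to $\mathbbm{1}\{S_i=k \wedge S_i'=l\}$, whose mean is $\Pr(S=k,S'=l)=d(k){\bf P_\pi}_{[k,l]}$) is identical, and your unbiasedness computation is the entrywise version of the paper's matrix-level argument via $\mathbb{E}[{\bf Y}\mid{\bf X}]={\bf X}{\bf P_\pi}$ and $\mathbb{E}[\tfrac{1}{n}{\bf X}^\top{\bf X}]={\bf D}$, resting on the same factorization of the joint law of a transition tuple. Your closing remark about needing an ergodic law of large numbers if the tuples come from contiguous trajectories is a sensible caveat that the paper leaves implicit.
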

With a slight overload of notation, let ${\bf U}\in \mathbb R^{|\mathcal S|\times |S|}$ and ${\bf V}\in \mathbb R^{|\mathcal S|\times |S|}$ be the left and the right singular vectors for ${\bf A_n}$, and let ${\bf\Sigma} \in \mathbb R^{|S|\times |S|}$ be the singular values such that 
\begin{align}
    {\bf A_n} &= {\bf U\Sigma V}^\top. \label{eq:svd1}
\end{align}
Now pre-multiplying both sides in \eqref{eq:svd1} with $\bf U^\top$, post-multiplying with $\bf V$ and observing that $\bf U^\top  U= V^\top V= I$,
\begin{align}
   {\bf U}^\top {\bf A_n} {\bf V}&= {\bf \Sigma}. \label{eq:svd2}
\end{align}
Expanding \eqref{eq:svd2} using \eqref{eq:xy}, we obtain
\begin{align}
    \frac{1}{n} \sum_{i=1}^n {\bf U^\top} x_i y_i^\top {\bf V}&= {\bf\Sigma}. \label{eq:svd3}
\end{align}
Now, let $f_{\bf U}: \mathcal S \rightarrow \mathbb R^{|S|}$ and $g_{\bf V}: \mathcal S \rightarrow \mathbb R^{|S|}$ be linear functions parameterized by `weights' $\bf U$ and $\bf V$ respectively. 
Then \eqref{eq:svd3} can be expressed as,
\begin{align}
        \frac{1}{n} \sum_{i=1}^n f_{\bf U}(x_i) g_{\bf V}(y_i)^\top &= \bf \Sigma. \label{eqn:linearcov}
\end{align}
The form in \eqref{eqn:linearcov} is useful because it naturally leads to a loss function to search for `parameters' $\bf U$ and $\bf V$ that provide the decomposition in \eqref{eq:svd1}.
Specifically, if we want the top-k decomposition, let $\widehat {\bf U}\in \mathbb R^{|S|\times k}$ and $\widehat {\bf V}\in \mathbb R^{|S|\times k}$ be the estimates for top-k left and right singular vectors, and let ${\bf \Sigma(\widehat U, \widehat V)} \in \mathbb R^{k \times k}$ be defined such that,
\begin{align}
        {\bf \Sigma \left(\widehat U, \widehat V\right)} \coloneqq \frac{1}{n} \sum_{i=1}^n f_{\bf \widehat U}(x_i) g_{\bf \widehat V}(y_i)^\top, \label{eqn:C}
\end{align}
where $f_{\bf \widehat U}: \mathcal S \rightarrow \mathbb R^{k}$ and $g_{\bf \widehat V}: \mathcal S \rightarrow \mathbb R^{k}$.
To perform the top-k decomposition, one could now search for $\bf \widehat U$ and $\bf\widehat V$ that makes $\bf \Sigma(\widehat U, \widehat V)$ resemble the properties of $\bf \Sigma$ i.e., $\bf \Sigma(\widehat U, \widehat V)$ is a diagonal matrix with large values on the diagonal.
Specifically, we consider the following optimization problem where given ${\bf \widehat U ^\top \widehat U} = {\bf \widehat V^\top \widehat V}  = \bf I$ the diagonal elements of $\bf \Sigma(\widehat U, \widehat V)$ are maximized while the off-diagonal elements are $0$,
\begin{align}
   {\bf U^*, V^*} &\coloneqq \argmin_{\bf \widehat U, \widehat V}
   - \sum_{i=1}^k {\bf \Sigma (\widehat U, \widehat V)}_{[i,i]} 
   \\
   & \hspace{60pt} \text{s.t.}  \,\,   {\bf \Sigma (\widehat U, \widehat V )}_{[i,j]} = 0 \quad  \forall i\neq j.
   \label{eqn:linearSVD}
\end{align}
Resulting function $f_{\bf U^*}$ provides the representation for the states. For instance,  similar to the example in Figure \ref{fig:svd}, $f_{\bf U^*}(x_s) =  {\bf U^*}^\top x_s$, which is the row of ${\bf U^*} \in \mathbb R^{|S|\times k}$ corresponding to the state $s$ because $x_s$ is one-hot. 
The function $g_{\bf V^*}$ is only used in searching for the function $f_{\bf U^*}$.
In Figure \ref{fig:tabular4room}(c), we provide an illustrative example on the 4-rooms domains, where it can be observed that the learned representation clusters the states that have similar transition distributions (even when $\bf D$ is far from uniform).

\paragraph{Exploration:}   
We now discuss how representations obtained from the SVD decomposition of ${\bf D}{\bf P_\pi}$ can also be beneficial for exploration.
In the following, we show that a pseudo-count estimate of a state's visitation frequency, relative to other states, can be obtained instantly by just computing the norms of the learned state representation. 

Let the decomposition of ${\bf D}{\bf P_\pi} = \bf U\Sigma V^\top$ and as earlier let the representation function $f_{\bf U}$ be parameterized linearly with left singular values $\bf U$. 
Let ${\bf \Lambda \coloneqq \Sigma}^2$, then the weighted norm of the representations $\lVert f_{\bf U}(x_s)\rVert_{{\bf \Lambda}^{-1}}$ is inversely proportional to $d(s)$, i.e.,\textit{ how often $s$ has been observed in the dataset relative to other states}.
\begin{thm}
\thlabel{thm:exp}
    If ${\bf P_\pi} {\bf P_\pi} ^\top$ and $\bf D$ are invertible,
    then for $\alpha_{s} \coloneqq ({\bf P_\pi} {\bf P_\pi}^\top)^{-1}_{[s,s]}$,
    \begin{align}
        \big\lVert f_{\bf U}(x_s)\big\rVert _{\bf \Lambda^{-1}}^2 = \frac{\alpha_s}{d(s)^2}. \label{eqn:bonus}
    \end{align}
\end{thm}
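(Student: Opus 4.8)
The plan is to reduce the weighted squared norm to a single quadratic form and then recognize the relevant matrix as the inverse of a Gram matrix built directly from the SVD of ${\bf D}{\bf P_\pi}$. First, since $f_{\bf U}(x_s) = {\bf U}^\top x_s$ (the row of ${\bf U}$ indexed by $s$, because $x_s$ is one-hot) and ${\bf \Lambda} = {\bf \Sigma}^2$, I would expand the definition of the weighted norm to get
\begin{align}
\big\lVert f_{\bf U}(x_s)\big\rVert_{\bf \Lambda^{-1}}^2 = x_s^\top {\bf U}{\bf \Lambda}^{-1}{\bf U}^\top x_s.
\end{align}
The whole task then becomes computing ${\bf U}{\bf \Lambda}^{-1}{\bf U}^\top$ in closed form in terms of ${\bf D}$ and ${\bf P_\pi}$.

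Second, I would form the Gram matrix of ${\bf D}{\bf P_\pi}$ from its SVD ${\bf D}{\bf P_\pi} = {\bf U}{\bf \Sigma}{\bf V}^\top$. Using ${\bf V}^\top{\bf V} = {\bf I}$ gives
\begin{align}
({\bf D}{\bf P_\pi})({\bf D}{\bf P_\pi})^\top = {\bf U}{\bf \Sigma}{\bf V}^\top{\bf V}{\bf \Sigma}{\bf U}^\top = {\bf U}{\bf \Sigma}^2{\bf U}^\top = {\bf U}{\bf \Lambda}{\bf U}^\top.
\end{align}
Because ${\bf D}$ is diagonal, ${\bf D}^\top = {\bf D}$, so the left-hand side also equals ${\bf D}({\bf P_\pi}{\bf P_\pi}^\top){\bf D}$, which is invertible under the hypotheses that ${\bf D}$ and ${\bf P_\pi}{\bf P_\pi}^\top$ are invertible. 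Invertibility guarantees all singular values are nonzero, so ${\bf \Sigma}$ (hence ${\bf \Lambda}$) is invertible; combined with ${\bf U}$ being orthogonal, I can invert both sides of the identity to obtain
\begin{align}
{\bf U}{\bf \Lambda}^{-1}{\bf U}^\top = \big({\bf D}({\bf P_\pi}{\bf P_\pi}^\top){\bf D}\big)^{-1} = {\bf D}^{-1}({\bf P_\pi}{\bf P_\pi}^\top)^{-1}{\bf D}^{-1}.
\end{align}

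Third, I would substitute this back into the quadratic form. Since $x_s$ is one-hot, $x_s^\top {\bf M} x_s = {\bf M}_{[s,s]}$ for any ${\bf M}$; with ${\bf M} = {\bf D}^{-1}({\bf P_\pi}{\bf P_\pi}^\top)^{-1}{\bf D}^{-1}$ and ${\bf D}^{-1}$ diagonal with entries $1/d(s)$, the $(s,s)$ entry is $({\bf P_\pi}{\bf P_\pi}^\top)^{-1}_{[s,s]}/d(s)^2 = \alpha_s/d(s)^2$, which is exactly the claim. The individual steps are routine; the main (and only modest) obstacle is step two, namely spotting that ${\bf U}{\bf \Lambda}{\bf U}^\top$ is precisely the Gram matrix $({\bf D}{\bf P_\pi})({\bf D}{\bf P_\pi})^\top$ and then using ${\bf D}^\top = {\bf D}$ to factor its inverse so that the $d(s)^2$ denominator cleanly separates from the $\alpha_s$ term.
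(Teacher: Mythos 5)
Your proposal is correct and follows essentially the same route as the paper: both form the Gram matrix $({\bf D}{\bf P_\pi})({\bf D}{\bf P_\pi})^\top = {\bf U}{\bf \Lambda}{\bf U}^\top$, identify its inverse with ${\bf D}^{-1}({\bf P_\pi}{\bf P_\pi}^\top)^{-1}{\bf D}^{-1}$, and read off the $(s,s)$ entry of the resulting quadratic form. The only difference is cosmetic: the paper reaches the identity ${\bf U}{\bf \Lambda}^{-1}{\bf U}^\top = {\bf D}^{-1}({\bf P_\pi}{\bf P_\pi}^\top)^{-1}{\bf D}^{-1}$ by computing the pseudo-inverse of the Gram matrix in two ways, whereas you invert the identity directly using the orthogonality of ${\bf U}$ and the invertibility hypotheses, which is a slight streamlining of the same argument.
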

Here $\alpha_s$ is a state-dependent constant that is independent of $d(s)$.
\thref{thm:exp} is particularly appealing because not only can $f_U$ provide representations that respect the similarity in terms of next-state transitions, but it also orients the representations such that norms of the representations can be used to indicate pseudo-counts.
Further estimating the bonus only requires computing $\big\lVert f_{\bf U}(x_s)\big\rVert _{\bf \Lambda^{-1}}^2$, where $\bf \Lambda$ is a diagonal matrix which can be inverted in $O(k)$ time, where $k$ is the dimension of the representation. Therefore computing the norm is also an $O(k)$ time operation.
Figure \ref{fig:tabular4room}(D), provides an illustrative example  of what $\lVert f_{\bf U}(x_s)\rVert_{{\bf \Lambda}^{-1}}$ captures in the 4-rooms domain.

\begin{rem}
Intuitively, there are extra degrees of freedom in how representations can be rotated/re-oriented such that their relative distances from each other are the same, but their norm changes.  This freedom is leveraged in \eqref{eqn:bonus} to re-orient the representations such that states that have higher visitation tend to get representations that have a lower norm value. This can be visualized in Fig \ref{fig:tabular4room}(C) and \ref{fig:4roomfa}(C).
\end{rem}

\begin{rem}
Alternatively, observe that $\bf D \bf P_\pi$ is related to the (normalized)  \textit{count matrix} \citep{bellemare2016unifying}. Therefore, decomposing it results in representations that have properties of both the visitation frequency $\bf D$ and the transition structure $\bf P_\pi$.
\end{rem}

\section{Function Approximations}
\label{sec:fa}
\begin{figure*}[t]
    \centering
    \includegraphics[width=0.23\textwidth]{images/heatmap.png}
    \includegraphics[width=0.2\textwidth]{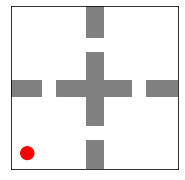}
    \includegraphics[width=0.2\textwidth]{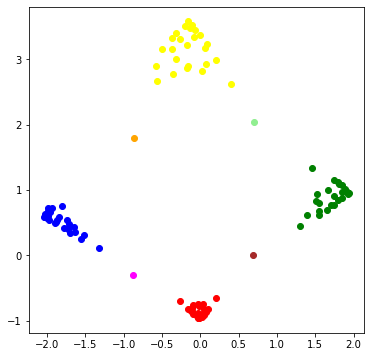}
    \includegraphics[width=0.23\textwidth]{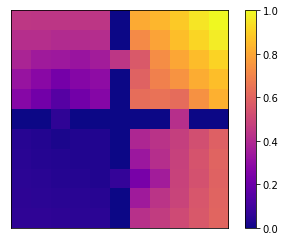}
      \caption{\textbf{(A)}  State visitation frequency in the data collected using a random policy. \textbf{(B)} Example of the domain image used as the state. \textbf{(C)} 2D representations learned from images (30x30x3) corresponding to the states. Reference colors for the states are the same as that used in Figure \ref{fig:tabular4room} (B). \textbf{(D)} Bonus constructed using the learned representations. Legend: values are normalized between $0$ and $1$ and a darker color represents a lower value. 
      }
    \label{fig:4roomfa}
\end{figure*}

While \eqref{eqn:linearSVD} provides the problem \textit{formulation}, it does not provide a scalable way to \textit{perform} the optimization.
Further, we would want to extend the procedure to work with deep neural networks.
In the following, we elaborate on a way to achieve that.

\paragraph{Representation Learning}
Leveraging the form in \eqref{eqn:linearcov}, we extend the idea using non-linear functions $f_{\theta_1}:  \mathbb R^d \rightarrow \mathbb R^k$ and $g_{\theta_2}:  \mathbb R^d \rightarrow \mathbb R^k$ that are parameterized by weights $\theta \coloneqq [\theta_1, \theta_2]$ and take states (need not be tabular) as inputs,
\begin{align}
    {\bf \Sigma}(\theta) \coloneqq \frac{1}{n} \sum_{i=1}^n f_{\theta_1}(s_i) g_{\theta_2}(s'_i)^\top & \in \mathbb{R}^{k\times k}. \label{eqn:fullC}
\end{align}
Specifically, let
\begin{align}
    \mathcal L_{\operatorname{diag}}\left({\bf \Sigma}(\theta) \right) &\coloneqq \frac{1}{k} \sum_{i=1}^k {\bf \Sigma}(\theta)_{[i,i]}, \label{eqn:svdl1}
     \\
    \mathcal L_{\operatorname{off}}\left({\bf \Sigma}(\theta) \right) &\coloneqq \frac{1}{k^2-k} \sum_{\substack{i,j=1\\ i\neq j}}^k {\bf \Sigma}(\theta)_{[i,j]}^2. \label{eqn:svdl2}
\end{align}
Now similar to \eqref{eqn:linearSVD} we define a loss with soft constraints by combining \eqref{eqn:svdl1} and \eqref{eqn:svdl2},
\begin{align}
    \mathcal L({\bf \Sigma}(\theta)) &\coloneqq - \mathcal L_{\operatorname{diag}}({\bf \Sigma}(\theta)) + \lambda_r \mathcal L_{\operatorname{off}}({\bf \Sigma}(\theta)), \label{eqn:loss}
\end{align}
where $\lambda_r$ is a hyper-parameter.
For the tabular setting in \eqref{eqn:linearSVD}, the condition for $\bf \widehat U^\top \widehat {\bf U}= \widehat {\bf V}^\top \widehat {\bf V}= I$ prevents unbounded maximization of the diagonal by ensuring that the columns of  $\bf U$ (i.e., each feature dimension), and columns of $\bf V$ have a norm of one.
In the function approximation setting it is not immediate what this condition should be. Therefore, to prevent unbounded maximization of ${\bf  \Sigma(\theta)}_{[i,i]}$ in \eqref{eqn:svdl1}, we use batchnorm \citep{ioffe2015batch} on the output of both $f$ and $g$ to ensure that each feature dimension remains normalized.
In Figure \ref{fig:4roomfa}(C) we illustrate the representations learned by minimizing the loss in \eqref{eqn:loss} with convolutional neural networks, batch-normalization, and $\lambda_r=1$.

\paragraph{Mini-batch optimization: }
In practice, computing ${\bf \Sigma}(\theta)$ in \eqref{eqn:fullC} using all $n$ samples is impractical. Instead, it would be preferable to have a mini-batch estimate of ${\bf \Sigma}(\theta)$,
\begin{align}
    \widehat {\bf \Sigma}(\theta) \coloneqq \frac{1}{b}\sum_{i=1}^b f_{\theta_1}(s_i)  g_{\theta_2}(s'_i)^\top, \label{eqn:batchC}
\end{align}
where $b \ll n$ is the mini-batch size.
Unfortunately, it can be shown that while the gradients for $\mathcal L_{\operatorname{diag}}({\bf \widehat \Sigma}(\theta))$ are unbiased,  gradients for $\mathcal L_{\operatorname{off}}({\bf \widehat \Sigma}(\theta))$ will be biased in general. 
\begin{thm}
\thlabel{thm:bias}
Gradient of $\mathcal L_{\operatorname{diag}}({\bf \widehat \Sigma}(\theta))$ is an unbiased estimator of  $\mathcal L_{\operatorname{diag}}({\bf  \Sigma}(\theta))$, and  gradient of $\mathcal L_{\operatorname{off}}({\bf \widehat \Sigma}(\theta))$ is in general a biased estimate of $\mathcal L_{\operatorname{off}}({\bf \Sigma}(\theta))$ , i.e.,
    \begin{align}
    \mathbb E\left[\partial_\theta \mathcal L_{\operatorname{diag}}\left(\widehat {\bf \Sigma}(\theta) \right)\right] &= 
    \partial_\theta \mathcal L_{\operatorname{diag}}({\bf \Sigma}(\theta)), 
    \\
    \mathbb E\left[\partial_\theta \mathcal L_{\operatorname{off}}\left(\widehat {\bf \Sigma}(\theta) \right)\right] &{\,\, \color{red}\neq} \,\,
    \partial_\theta \mathcal L_{\operatorname{off}}({\bf \Sigma}(\theta)), 
\end{align}
where the expectation is over the randomness of the mini-batch.
\end{thm}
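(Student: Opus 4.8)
The plan is to exploit the structural difference that $\mathcal{L}_{\operatorname{diag}}$ is \emph{linear} in the matrix entries whereas $\mathcal{L}_{\operatorname{off}}$ is \emph{quadratic}, together with the elementary fact that a sample average is an unbiased estimate of a mean but its \emph{square} is not. First I would fix an index pair $(i,j)$ and write the per-sample contribution $\phi^{ij}_m \coloneqq [f_{\theta_1}(s_m)]_i\,[g_{\theta_2}(s'_m)]_j$, so that $\widehat{\bf \Sigma}(\theta)_{[i,j]} = \tfrac{1}{b}\sum_{m=1}^b \phi^{ij}_m$ and, treating the mini-batch as i.i.d.\ draws (which also covers uniform resampling from $\mathcal D$), ${\bf \Sigma}(\theta)_{[i,j]} = \mathbb E[\phi^{ij}]$. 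Throughout I would interchange $\partial_\theta$ and $\mathbb E$ freely; this is justified because the sampling law does not depend on $\theta$ and $f_{\theta_1}, g_{\theta_2}$ are differentiable, and I will not belabor this routine step.

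For the \textbf{diagonal} claim the argument is immediate. Since $\mathcal{L}_{\operatorname{diag}}(\widehat{\bf \Sigma}(\theta)) = \tfrac{1}{k}\sum_{i=1}^k \tfrac{1}{b}\sum_{m=1}^b \phi^{ii}_m$ is a plain average over samples of a per-sample quantity, linearity of expectation gives $\mathbb E[\mathcal{L}_{\operatorname{diag}}(\widehat{\bf \Sigma}(\theta))] = \mathcal{L}_{\operatorname{diag}}({\bf \Sigma}(\theta))$; differentiating under the expectation then yields the claimed first equality.

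The \textbf{off-diagonal} claim carries the real content. Because $\mathcal{L}_{\operatorname{off}}$ squares each entry, I would compute $\mathbb E[\widehat{\bf \Sigma}(\theta)_{[i,j]}^2]$ by expanding the square into a double sum over $(m,\ell)$ and separating $m=\ell$ from $m\neq\ell$. Under i.i.d.\ sampling the $m\neq\ell$ terms factorize to ${\bf \Sigma}(\theta)_{[i,j]}^2$, while the $b$ diagonal terms leave a variance contribution, giving
\begin{align}
\mathbb E\big[\widehat{\bf \Sigma}(\theta)_{[i,j]}^2\big] = {\bf \Sigma}(\theta)_{[i,j]}^2 + \frac{1}{b}\operatorname{Var}\!\big(\phi^{ij}\big).
\end{align}
Differentiating in $\theta$ produces the target gradient $\partial_\theta {\bf \Sigma}(\theta)_{[i,j]}^2$ plus an excess term $\tfrac{1}{b}\partial_\theta \operatorname{Var}(\phi^{ij})$. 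The final step—and the step I expect to be the main obstacle—is to certify that this residual does not vanish \emph{in general}, since a priori it could cancel after summing over $i\neq j$. I would settle this by exhibiting a minimal explicit instance (for example a single off-diagonal coordinate whose feature product $\phi^{ij}$ has $\theta$-dependent spread) for which $\partial_\theta \operatorname{Var}(\phi^{ij})\neq 0$ and the residual survives, thereby establishing the strict inequality.
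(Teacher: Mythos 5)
Your proposal is correct and follows essentially the same route as the paper: the diagonal claim via linearity of expectation and interchanging $\partial_\theta$ with $\mathbb E$ under a $\theta$-independent sampling law, and the off-diagonal claim via the fact that the square of a sample average is a biased estimate of the squared mean. The only difference is that the paper settles the bias by specializing to a mini-batch of size $b=1$ and invoking $\mathbb E[X^2]\neq\mathbb E[X]^2$, whereas you carry out the variance decomposition for general $b$, identifying the bias as $\tfrac{1}{b}\partial_\theta\operatorname{Var}(\phi^{ij})$ --- a mild quantitative strengthening that, like the paper, still leaves the ``in general nonzero'' step as an asserted counterexample rather than a fully constructed one.
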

This problem can be attributed to the fact that in \eqref{eqn:svdl2},
\begin{align}
    {\bf \Sigma}(\theta)_{[i,j]}^2 = \left(\frac{1}{n} \sum_{k=1}^n f_\theta(s_k) \, g_\theta(s'_k)^\top_{[i,j]} \right)^2,
\end{align}
the square is \textit{outside} the expectation.
This is reminiscent of the issue in residual gradient optimization of the mean squared Bellman error \citep{baird1995residual}.
Therefore, to address this issue, we also use the double-sampling trick to obtain unbiased gradient estimates.
Specifically, let 
\begin{align}
     \widehat {\bf \widehat \Sigma}(\theta) \coloneqq \left(\frac{1}{b_2}\sum_{i=1}^{b_2} f_{\theta_1}(s_i)  g_{\theta_2}(s'_i)^\top\right), \label{eqn:batchC2}
\end{align}
be estimated using $b_2$ samples of a separate batch of data than the one used in \eqref{eqn:batchC}.
Then using \eqref{eqn:batchC} and \eqref{eqn:batchC2}, we use $\widehat {\mathcal L}_\text{off}(\theta)$ instead of ${\mathcal L}_\text{off}({\bf \Sigma}(\theta))$ in \eqref{eqn:svdl2}, where
\begin{align}
    \widehat {\mathcal L}_{\operatorname{off}}(\theta) &\coloneqq \frac{1}{(k^2-k)} \sum_{\substack{i,j=1\\ i\neq j}}^k \widehat {\bf \Sigma}(\theta)_{[i,j]} \,\, \operatorname{sg}\left(\widehat{\bf \widehat \Sigma}(\theta)_{[i,j]}\right) \\
    & \,\, +\frac{1}{(k^2-k)} \sum_{\substack{i,j=1\\ i\neq j}}^k \widehat{\bf \widehat \Sigma}(\theta)_{[i,j]} \,\, \operatorname{sg}\left(\widehat{\bf \Sigma}(\theta)_{[i,j]}\right),
    \label{eqn:svdl3}
\end{align}
where $\operatorname{sg}$ corresponds to the stop-gradient operator.
\begin{thm}
\thlabel{thm:bias2}
Gradient $\partial_\theta \widehat{ \mathcal L}_{\operatorname{off}}(\theta)$ is an unbiased estimator of $\partial_\theta \mathcal L_{\operatorname{off}}({\bf \Sigma}(\theta))$, i.e.,
    $$
    \mathbb E\left[\partial_\theta \widehat{ \mathcal L}_{\operatorname{off}}(\theta) \right]  = 
    \partial_\theta \mathcal L_{\operatorname{off}}({\bf \Sigma}(\theta)). $$
\end{thm}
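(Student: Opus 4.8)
The plan is to differentiate the surrogate loss $\widehat{\mathcal L}_{\operatorname{off}}(\theta)$ term by term while respecting the stop-gradient operator, and then to exploit the fact that $\widehat{\bf \Sigma}(\theta)$ and $\widehat{\bf \widehat \Sigma}(\theta)$ are built from two \emph{independent} mini-batches in order to factor the expectation of the resulting products. The core mechanism is the standard double-sampling identity: for two independent unbiased estimates $X_1,X_2$ of a quantity $\mu$ we have $\mathbb E[X_1 X_2]=\mathbb E[X_1]\mathbb E[X_2]=\mu^2$, whereas a single estimate gives $\mathbb E[X^2]=\mu^2+\operatorname{Var}(X)$, which is exactly the bias diagnosed in \thref{thm:bias}.

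First I would apply the stop-gradient. Since $\operatorname{sg}(\cdot)$ blocks gradient flow through its argument while retaining its numerical value, differentiating the two summands in \eqref{eqn:svdl3} yields
\begin{align}
\partial_\theta \widehat{\mathcal L}_{\operatorname{off}}(\theta) &= \frac{1}{k^2-k}\sum_{i \neq j} \big(\partial_\theta \widehat{\bf \Sigma}(\theta)_{[i,j]}\big)\,\widehat{\bf \widehat \Sigma}(\theta)_{[i,j]} \\
&\quad + \frac{1}{k^2-k}\sum_{i \neq j}\big(\partial_\theta \widehat{\bf \widehat \Sigma}(\theta)_{[i,j]}\big)\,\widehat{\bf \Sigma}(\theta)_{[i,j]}.
\end{align}
Next I would take the expectation over both mini-batches. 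Because the batches are drawn independently, in each product the differentiated factor is independent of the undifferentiated one, so the expectation factorizes, e.g. $\mathbb E\big[(\partial_\theta \widehat{\bf \Sigma}(\theta)_{[i,j]})\,\widehat{\bf \widehat \Sigma}(\theta)_{[i,j]}\big]=\mathbb E[\partial_\theta \widehat{\bf \Sigma}(\theta)_{[i,j]}]\cdot\mathbb E[\widehat{\bf \widehat \Sigma}(\theta)_{[i,j]}]$. Using that each batch estimator is unbiased for ${\bf \Sigma}(\theta)$, i.e. $\mathbb E[\widehat{\bf \Sigma}(\theta)_{[i,j]}]=\mathbb E[\widehat{\bf \widehat \Sigma}(\theta)_{[i,j]}]={\bf \Sigma}(\theta)_{[i,j]}$, and interchanging expectation with $\partial_\theta$ (justified by dominated convergence under the standard smoothness and integrability assumptions on $f_{\theta_1},g_{\theta_2}$), both summands collapse to the same quantity ${\bf \Sigma}(\theta)_{[i,j]}\,\partial_\theta {\bf \Sigma}(\theta)_{[i,j]}$.

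Finally I would match this against the true gradient. Applying the product rule to \eqref{eqn:svdl2} gives $\partial_\theta {\bf \Sigma}(\theta)_{[i,j]}^2 = 2\,{\bf \Sigma}(\theta)_{[i,j]}\,\partial_\theta {\bf \Sigma}(\theta)_{[i,j]}$, so that $\partial_\theta \mathcal L_{\operatorname{off}}({\bf \Sigma}(\theta)) = \tfrac{2}{k^2-k}\sum_{i\neq j}{\bf \Sigma}(\theta)_{[i,j]}\,\partial_\theta {\bf \Sigma}(\theta)_{[i,j]}$, and the two identical summands from the previous step sum to precisely this expression, which establishes the claim. The delicate point — and the reason for the symmetric two-term construction with stop-gradients rather than a naive product $\widehat{\bf \Sigma}(\theta)\,\widehat{\bf \widehat \Sigma}(\theta)$ — is accounting for the factor of $2$ from the product rule: each stop-gradient forces the gradient through exactly one factor, and summing the two symmetric terms reproduces both halves of the product-rule expansion. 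I expect the independence-based factorization together with the expectation/derivative interchange to be the main (though mild) technical obstacle; the remainder is routine algebra.
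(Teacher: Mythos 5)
Your proposal is correct and follows essentially the same route as the paper's proof: differentiate through the stop-gradients, factor the expectation of each product using the independence of the two mini-batches, invoke unbiasedness of each batch estimate of ${\bf \Sigma}(\theta)$ together with the expectation--derivative interchange, and observe that the two symmetric terms each contribute half of the product-rule expansion of $\partial_\theta {\bf \Sigma}(\theta)_{[i,j]}^2$. The only cosmetic difference is that you state the dominated-convergence justification for the interchange explicitly, which the paper leaves implicit.
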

\paragraph{Exploration: }
Having developed a mini-batch optimization procedure to obtain the representations, we now extend \eqref{eqn:bonus} to the function approximation setting.
Specifically, using
${\bf \Lambda}(\theta) \coloneqq {\bf \Sigma}^2(\theta)$, we construct the estimate for pseudo-count using $\lVert f_{\theta_1}(s)\rVert_{{\bf \Lambda}(\theta)^{-1}}$.

However, as discussed earlier, obtaining ${\bf \Sigma}(\theta)$ exactly can be hard in practice. Therefore, in practice \textbf{(a)} we keep a running mean estimate $\widetilde {\bf \Sigma}(\theta)$ for different $\widehat {\bf \Sigma}(\theta)$ observed across mini-batches.
Further, \textbf{(b)} as the loss in \eqref{eqn:loss} encourages $\widetilde {\bf \Sigma}(\theta)$  to be diagonal, we only maintain a running estimate $\widetilde {\bf \Sigma}(\theta)$ for the diagonal elements and keep the off-diagonal elements as zero.
Now using
$\widetilde {\bf \Lambda}(\theta) \coloneqq  \widetilde {\bf \Sigma}^2(\theta)$, we construct the estimate for pseudo-counts of $s$ using $\lVert f_{\theta_1}(s)\rVert_{\widetilde {\bf\Lambda}(\theta)^{-1}}$.
In Figure \ref{fig:4roomfa} (D) we illustrate the effectivity of estimating pseudo-counts using this procedure.

\section{POMDP \& Predictive State Representation}
\label{sec:PSR}
An important advantage of the proposed approach is that we can consider other matrices (e.g., ${\bf P} \in \mathbb R^{|S||A|\times |S|}$ instead of ${\bf P_\pi} \in \mathbb R^{|S|\times |S|}$) as well. 
In this section, we take a step further and consider the\textit{ system-dynamics matrix} that are well-suited for \textit{partially observable} settings.

\paragraph{Brief Review: }
For a partially observable system, let $\mathcal O$ be the set of possible observations and let $O_t \in \mathcal O$ be a random variable for the observations at time $t$.
Let $\mathcal H$ be a set of observation-action sequences, such that $\mathcal H = \bigcup_{t\in \{0,1, ...,T\}} \mathcal H_t$, where $T$ is the horizon length, $\mathcal H_0= \mathcal O$, and $\forall t>0, \mathcal H_t = \mathcal H_{t-1} \times \mathcal A \times \mathcal O$.
Let a \textit{test} $\tau \in \mathcal H$ be any sequence of action and observations $(a_1, o_1,  ...., a_j, o_j)$. 
Given a history $h=(o_0,a_0,...,a_{i-1},o_i) \in \mathcal H$, let
$p(\tau|h) \coloneqq  \Pr(O_{i+1}=o_1, ..., O_{i+j}=o_j|h, A_i=a_1, ..., A_{i+j-1}=a_j)$ be the history-conditioned prediction of a test $\tau$.
Intuitively, $p(\tau|h)$ encodes the probability of future outcomes given the observed history.
A system-dynamics matrix ${\bf W} \in \mathbb R^{|\mathcal H|\times|\mathcal H|}$ is defined to have its rows correspond to histories and columns correspond to tests, such that $ {\bf W}_{[i,j]} \coloneqq p(\tau_j|h_i)$.
Figure \ref{fig:PSR} provides an illustration of a low-rank system-dynamics matrix ${\bf W}={\bf UV}'$, where ${\bf U}\in \mathbb R^{|\mathcal H|\times k}$, ${\bf V}' \in \mathbb R^{k \times |\mathcal H|}$. Further, let $(z_i)_{i=1}^k$ correspond to the latent variables of the system.
\begin{figure}[h]
    \centering
    \includegraphics[width=0.49\textwidth]{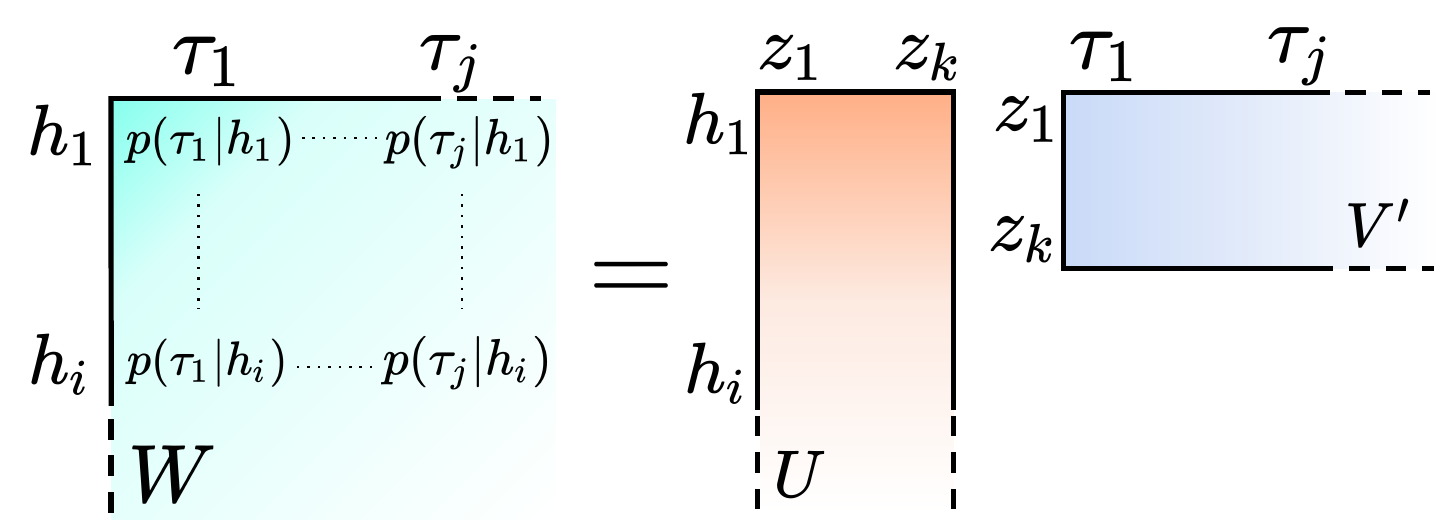}
    \caption{A low-rank system-dynamics matrix (adapted from the work by \citet{singh2012predictive}). Intuitively, row $\bf U_{[i]}$ provides a good representation for history $h_i$ because it contains \textit{all the information that is sufficient to infer the outcome probability of all the futures (tests), given $h_i$}.}
    \label{fig:PSR}
\end{figure}

Viewing $z_i$'s from different perspectives provides different insights.
If a POMDP has $k$ unobserved states then $z$ can correspond to the hidden states of the system. Correspondingly, the rows ${\bf U}_{[i]}$ would be the belief distribution over those $k$-hidden states, given the history $h_i$ so far, and columns of ${\bf V}'$ contain the probability of observing the futures/tests if the agent is in a given hidden state.
In contrast,  PSRs consider $z$ to be \textit{core tests} \citep{littman2001predictive,singh2003learning,singh2012predictive, james2004learning,rudary2003nonlinear}, such that the probability of all the remaining tests can be obtained as linear combinations of the probabilities of these core tests occurring.
Therefore, ${\bf U}_{[i]}$ encodes the probability of these $k$ core tests occurring given the history $h_i$, and the columns of ${\bf V}'$ contain the coefficients for the \textit{linear combination}.
Mathematically, this subtle difference from the standard POMDP perspective allows $\bf V'$ to have values that are not constrained to be probability distribution nor be positive, and thus allows PSR to model a much larger class of dynamical systems \citep{singh2012predictive}.
Transformed PSR (TPSR) extends this perspective further and lets ${\bf U}_{[i]}$ correspond to be a small number of \textit{linear combinations} of the probabilities of a larger number of tests.
Mathematically, the key difference is that this permits ${\bf U}_{[i]}$ to not be probability distributions and have negative values as well.
This perspective is particularly useful as the left singular vectors from the SVD on $\bf W$ now directly provide sufficient information from histories needed to infer future outcomes \citep{rosencrantz2004learning}.
\begin{figure*}[h]
    \centering
    \includegraphics[width=0.1\textwidth]{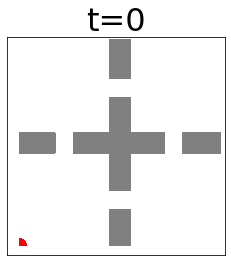}
    \includegraphics[width=0.1\textwidth]{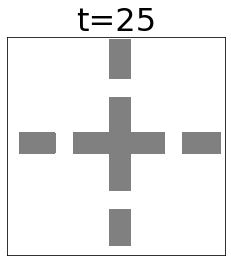}
    \includegraphics[width=0.1\textwidth]{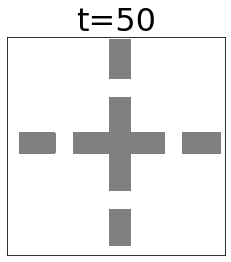}
    \includegraphics[width=0.1\textwidth]{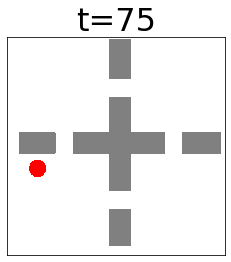}
    \includegraphics[width=0.1\textwidth]{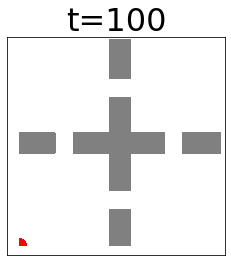}
    \includegraphics[width=0.1\textwidth]{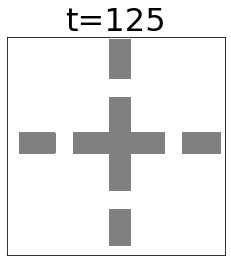}
    \includegraphics[width=0.1\textwidth]{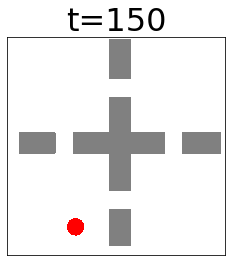}
    \includegraphics[width=0.1\textwidth]{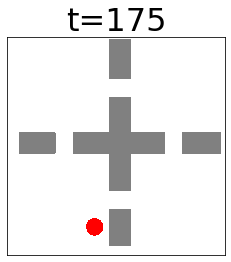}
    \includegraphics[width=0.1\textwidth]{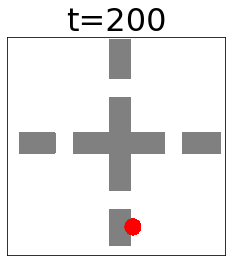}
    \\
    \includegraphics[width=0.1\textwidth]{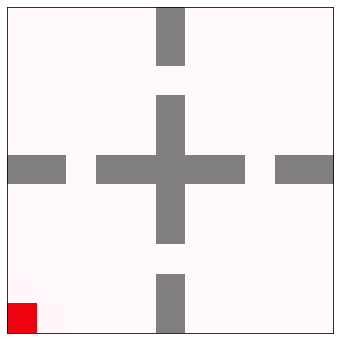}
    \includegraphics[width=0.1\textwidth]{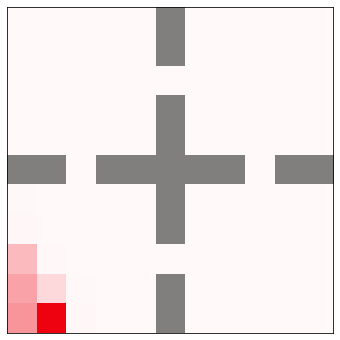}
    \includegraphics[width=0.1\textwidth]{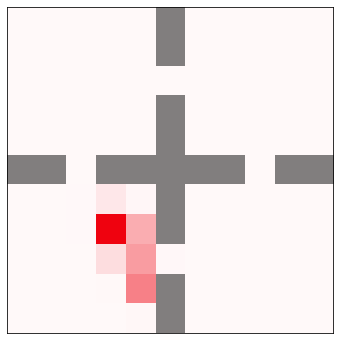}
    \includegraphics[width=0.1\textwidth]{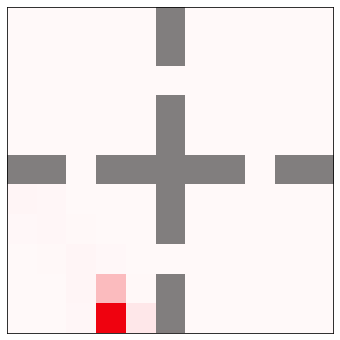}
    \includegraphics[width=0.1\textwidth]{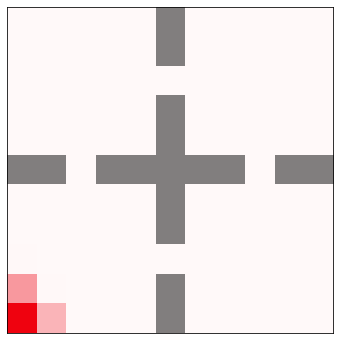}
    \includegraphics[width=0.1\textwidth]{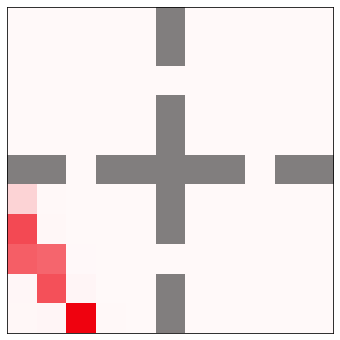}
    \includegraphics[width=0.1\textwidth]{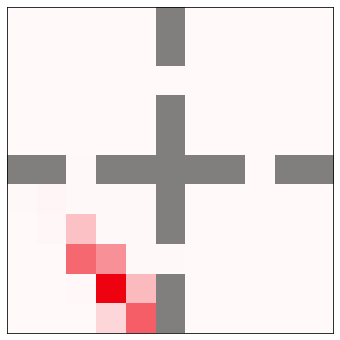}
    \includegraphics[width=0.1\textwidth]{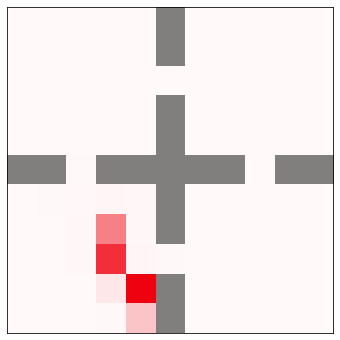}
    \includegraphics[width=0.1\textwidth]{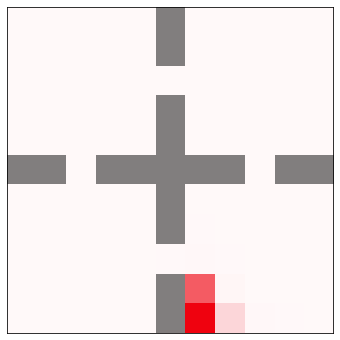}
    \caption{
Similar to Figure \ref{fig:tabular4room} and \ref{fig:4roomfa}, we provide plots in Figure \ref{fig:POMDP_rep} (in Appendix \ref{apx:ablationPOMDP}) to assess the quality of the learned distributions. However, it is not particularly informative for representations in higher dimensions.
Therefore, as an alternative, we learned a classification model (using a single-layer neural network) that aims to classify the true underlying state at the end of the observation sequence of a given (partial) history using the representation of that (partial) history. 
This provides us with a proxy for how well can the learned representations be used as to decode beliefs over the underlying true state.
Top-row illustrates the snapshots of the observations that were provided to the agent at timesteps $t=\{0,25,50,75,100,125,150,175,200\}$ (notice the agent missing in some frames). The bottom rows denote the belief decoded from the representation of the history till timestep $t$. See text for more details.  \textbf{Legend:} For the bottom row, the deeper red color indicates a higher degree of belief. For the top row, the red color indicates the location of the agent. }
    \label{fig:POMDP_toy2}
\end{figure*}

\subsection{De-Rex: Decomposition based Representation and Exploration }

We  build upon the idea of TPSR that was often restricted to small-scale setups \citep{rosencrantz2004learning, boots2011closing,boots2011online}, and extend them to a large-scale setting with function approximators using ideas developed in Section \ref{sec:fa}.
That is, in \eqref{eqn:fullC} we use $f_{\theta_1}: \mathcal H \rightarrow \mathbb R^k$  and $g_{\theta_2}: \mathcal H \rightarrow \mathbb R^k$ which take \textit{histories} and \textit{tests} as inputs, instead of states and next states, respectively.

Specifically, let
dataset $\mathcal D$ consist of $n$ trajectories.
For the $i^\text{th}$ trajectory $(o_0, a_0, ... a_{T-1}, o_{T})$ in $\mathcal D$, we define $h_{ij} \coloneqq (o_0, a_0,...,o_j)$ and $\tau_{ij} \coloneqq (a_j, o_{j+1}, ..., a_{T-1}, o_T)$.
We extend \eqref{eqn:fullC}, such that
\begin{align}
    {\bf \Sigma}(\theta) \coloneqq \frac{1}{nT} \sum_{i=1}^{n}\sum_{j=0}^{T-1} f_{\theta_1}(h_{ij}) g_{\theta_2}(\tau_{ij})^\top & \in \mathbb{R}^{k\times k}. \label{eqn:fullPSR}
\end{align}
Now, using the double sampling trick from  \eqref{eqn:svdl3} and using a loss similar to \eqref{eqn:loss}, we can diagonalize ${\bf \Sigma}(\theta)$ in \eqref{eqn:fullPSR}.
We provide more discussion on these in Appendix \ref{apx:implement}.

\textbf{4 rooms (POMDP): }Similar to Section \ref{sec:tabular} and \ref{sec:fa}, we also visualize the learned \textit{representations for the histories} and the estimated \textit{pseudo-counts for histories}.
For this purpose, we construct a domain where the observation $O_t$ at time $t$ corresponds to an  image of size $(30, 30, 3)$ that has the top-down view of the domain (similar to Figure \ref{fig:4roomfa} (B)), albeit that with probability $p$ the agent's marker (red dot) is completely hidden.
Therefore, when the marker is hidden, the observation $O_t$ does not provide any information about the location of the agent, and the location needs to be inferred using the history.
More details about this POMDP domain can be found in Appendix \ref{apx:ablationPOMDP}, where we also provide an ablation study across different values of $p = \{0.0, 0.3, 0.6, 0.9\}$. Here, we present the results for $p=0.3$.

 A sufficient statistic for any history is the ground truth state at the end of that history.  It can be observed  in Figures \ref{fig:POMDP_rep} and \ref{fig:POMDP_bonus} (in Appendix \ref{apx:ablationPOMDP}) that for the histories that are ending at a similar ground truth state (unknown to the agent), the proposed De-Rex method is able to provide similar representations for those histories. Additionally, the proposed method is also able to provide pseudo-counts for the underlying true state even when the true state is unknown.

In Figure \ref{fig:POMDP_toy2}, we provide an alternative way of assessing the quality of the learned representation for the histories.
It can be observed that De-Rex provides representations that can be effective at decoding the true underlying states.

\section{Related Work}
\label{sec:related}

Recent work by \citet{tang2022understanding} illustrates, under idealized assumptions, how self-predictive representation \citep{guo2020bootstrap,guo2022byol,schwarzer2020data} are  related to spectral decomposition.
Alternatively, \citet{ren2022spectral} consider explicitly decomposing the transition kernel, however, their method makes use of an estimated model, and their loss function requires access to the state distribution, which is typically not available directly.
Further, \citet{touati2021learning, lan2022novel} illustrate how the decomposition of long-term occupancy matrix could be useful for representation learning but they leave the exploration question open.
Some of these methods use a contrastive learning loss, which might be complimentary to the proposed non-contrastive loss \citep{garrido2022duality}.
Spectral properties of the transition matrix have also been used to learn options that aid exploration \citep{machado2017laplacian,machado2017eigenoption,liu2017eigenoption}. Further,  norms of a related quantity, known as successor representations, have been shown to be useful pseudo-counts \citep{machado2020count}.

We discuss other related works in detail in Appendix \ref{sec:apx:related}. Perhaps the work most relevant to ours is by \citet{wu2018laplacian} which discusses how to decompose the Laplacian of ${\bf P}_\pi$ to obtain the associated eigenfunctions.
Similar to ours, their method can also make use of deep networks and only requires access to transition samples.
However, their method is based on using a \textit{symmetric} Laplacian matrix, which may not be ideal depending on the underlying structure in the domain (for e.g., when $\bf P_\pi$ is not symmetric) being captured by the representations. 
Further, their method does not directly lends itself to estimating pseudo-counts for exploration either. 
In Section \ref{sec:results} we provide an empirical comparison against this method.

\section{Empirical Results}

\label{sec:results}

In this section, we aim to empirically investigate the properties of the proposed De-Rex algorithm for large-scale domains. 
Specifically, we aim to answer the two main questions pertaining to representation learning and exploration.

\textbf{\textit{Q1. Can De-Rex learn useful representations from rich observations?}}

To investigate this question, we use the DMLab 30~\citep{beattie2016deepmind} environment \citep{beattie2016deepmind}. DMLab is particularly useful here as it is a procedurally-generated, partially observable $3$-D world where properties such as object shapes, colors, and positions can vary every episode. The environments consist of $30$ diverse navigation and puzzle-solving tasks, where the state observations range from language instruction to pixel images from a first-person view, and rewards. The action set consists of a mix of discrete and continuous actions corresponding to motions for moving the agent's location, adjusting the visual angles, and controls for using various tools available in the environment. 

To make it more challenging, we consider the multi-task setting where there is a single agent that interacts with all the tasks at once and must learn a common representation across all of these tasks. In our setup, there is \textit{no explicit task identification} that is given to the agent, and so the agent must be able to deduce the task from its environment. We consider the following algorithms for comparison. Detailed discussions regarding implementation and hyperparameters for these are available in Appendix \ref{apx:implement}.

\textbf{RL Baseline:} As our base RL algorithm, we use V-MPO~\citep{song2019v}, a policy optimization method that had achieved state-of-the-art performance across several RL benchmarks involving both discrete and continuous tasks.
To avoid confounding factors, this baseline agent does not have any explicit representation learning loss.  

\textbf{Laplacian decomposition: }  We compare against the work by  \citet{wu2018laplacian} that uses spectral graph drawing to learn state representations by decomposing the Laplacian associated with the transition kernel.
As a baseline decomposition-based method, we use their proposed loss as an auxiliary task with the VMPO agent.

\textbf{De-Rex Representation: } Similar to the baseline decomposition-based method, we use the proposed auxiliary representation learning loss based on \eqref{eqn:loss} and \eqref{eqn:fullPSR} alongside the VMPO agent.
For these experiments, we focus only on understanding the usefulness of the representations being learned, without any additional exploration bonuses.

An empirical comparison of the algorithms for each task is presented in Figure \ref{fig:pergame}. 
In Appendix \ref{sec:moreres} we also provide aggregate and separate learning curves for each task.
It can be observed that the baseline VMPO that does not explicitly employ any representation learning losses, struggles to perform in the large-scale, rich observation setting. 
In comparison, the baseline method that uses decomposition of the Laplacian to learn state representations does slightly improve the performance.
However, the proposed De-Rex representation learning method that is designed to explicitly handle partial observability provides the most improvement.
We found that adding the De-Rex bonus for exploration did \textit{not} improve results for these tasks, as in the dense reward settings bonuses can distract more than be helpful.

\begin{figure}[!h]
    \centering
    \includegraphics[width=0.48\textwidth]{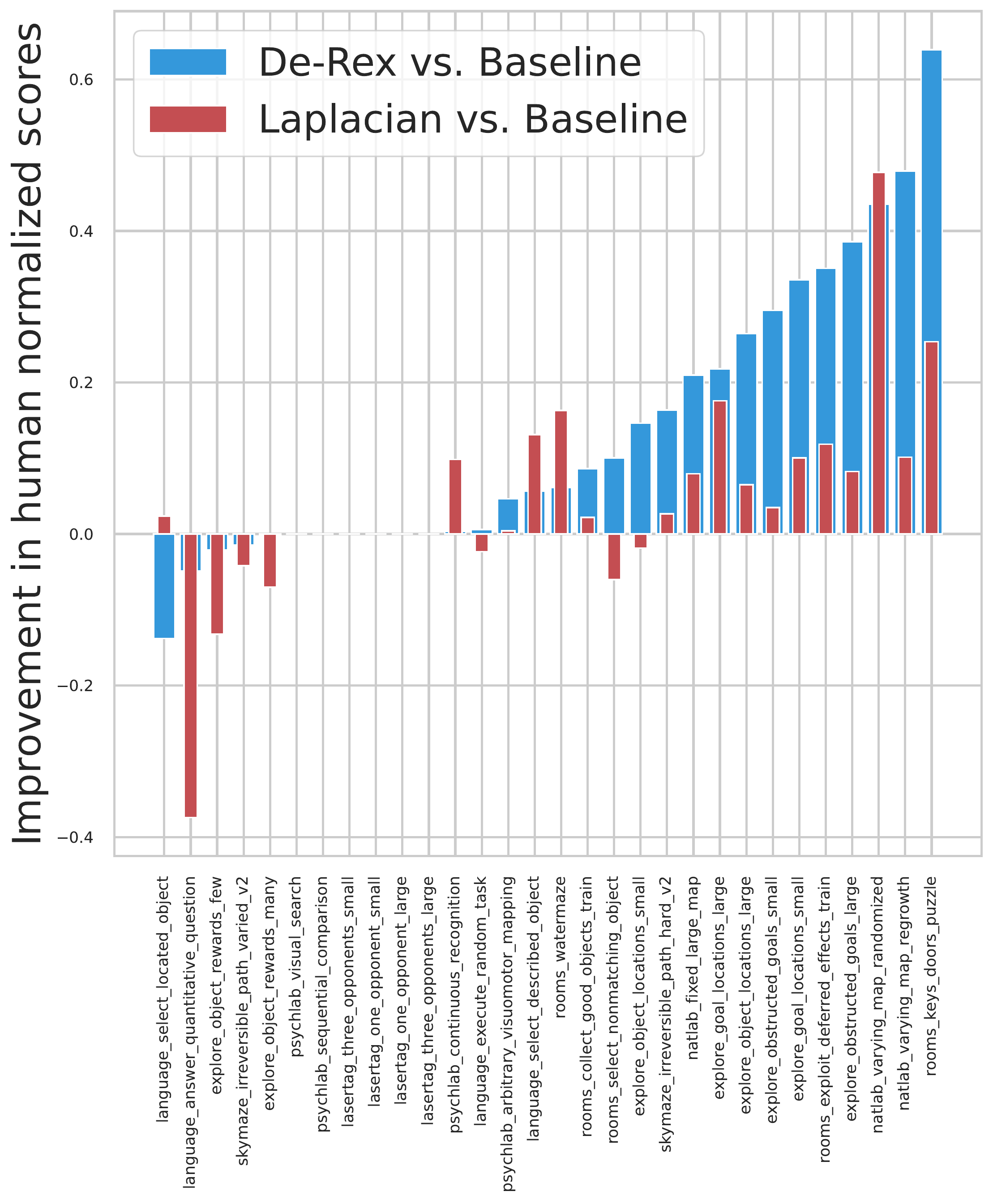}
    \caption{Per-game comparison of (a) De-Rex representation vs. VMPO baseline and (b) Laplacian representation vs. VMPO baseline. We compare the final human normalized scores (i.e., average of the last  $2\%$ of the total $2\cdot 10^{10}$ training steps). Each score is averaged over three seeds. 
    }
    \label{fig:pergame}
\end{figure}
\textbf{\textit{Q2. Can De-Rex explore complex environments with sparse rewards?}}
\begin{figure*}
    \centering
    \includegraphics[width=0.97\textwidth]{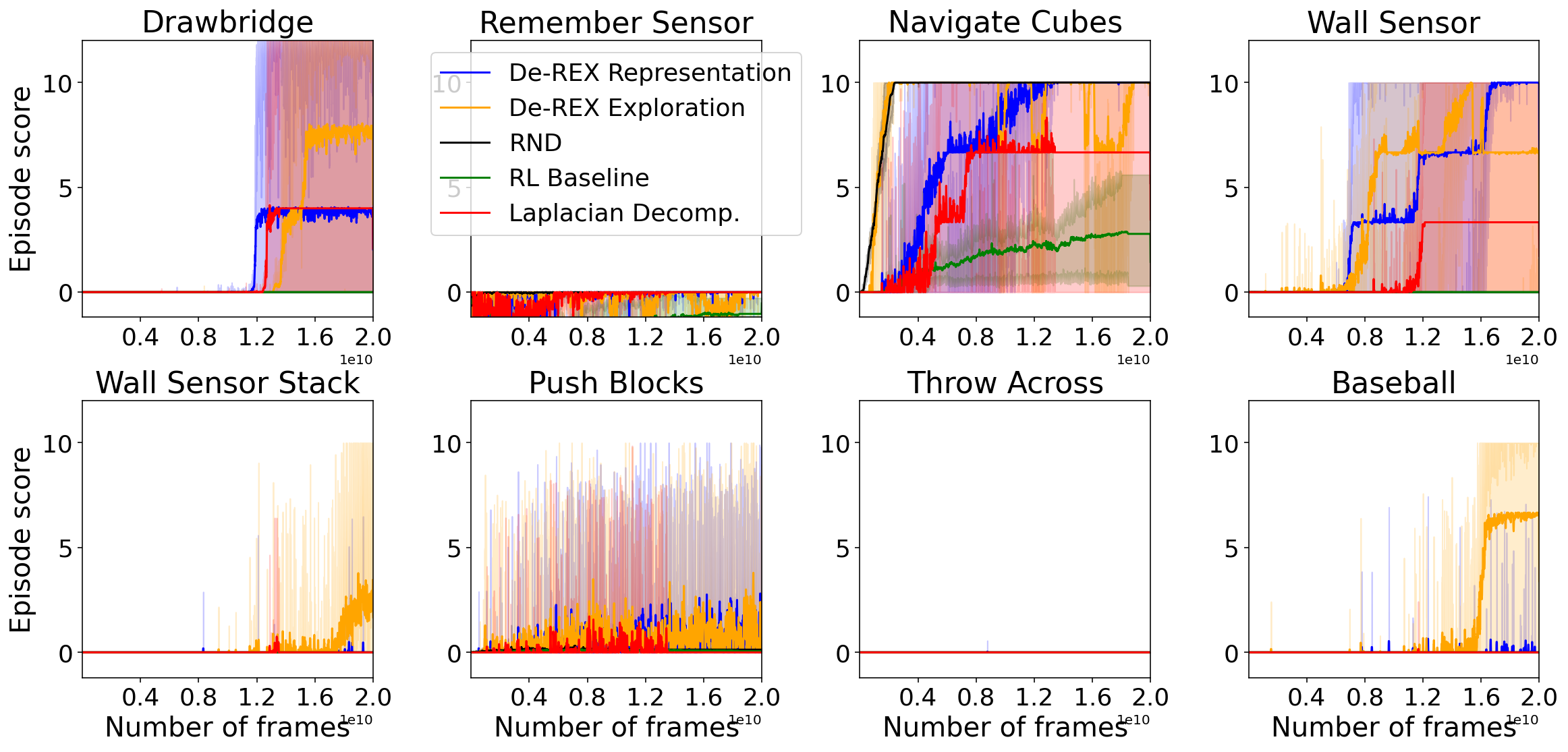}
    \caption{Learning curves for each task on the DM-Hard-8 suite \citep{gulcehre2019making} in the multi-task setup. The shaded area corresponds to the minimum and maximum across three seeds. High variance in these environments is due to  \textit{extremely sparse rewards} (e.g., only at the end of completing a sequence of complex tasks the agent is able to get any reward). Because of this, if the agent is able to explore well enough it solves the task, or else it fails completely. }
    \label{fig:dmhard_base}
\end{figure*}

Solving large-scale domains require both: learning representations and efficient exploration. 
To investigate this ability, we use the DMLab Hard-8 Suite~\citep{gulcehre2019making} that consists of $8$ hard exploration tasks in the DMLab environment~\citep{beattie2016deepmind} with sparse rewards, which require the agent to solve a sequence of non-trivial sub-tasks in the correct order to obtain the final reward.
Similar to DMLab-30, we make the setting more challenging by considering the multi-task setup.
For all the agents, we modify the reward at timestep $t$ by adding an algorithm-dependent bonus to encourage exploration, $ R_t' \coloneqq R_t + \lambda_b  \,\,\texttt{Bonus}_t,$
where $\lambda_b$ is a hyper-parameter. Besides comparing against the methods introduced before (that do not employ any exploration strategy and their $\texttt{Bonus}_t=0$ everywhere), we also compare the following methods.

\textbf{Random Network Distillation: } 
RND \citep{burda2018exploration} has emerged as a strong baseline exploration method that consists of training an observation encoder to predict a separate, fixed, randomly initialized encoder of the same observation. The prediction loss is then used to compute $\texttt{Bonus}_t$ for exploration. 

\textbf{De-Rex Exploration: } An important aspect of the proposed work is that the estimates for the pseudo-count of state visitations can be obtained easily. Besides using representation loss, we set $\texttt{Bonus}_t$ using the norm of the state/history representation at timestep $t$.

An empirical comparison of these methods is presented in Figure \ref{fig:dmhard_base}.
It can be observed that the base VMPO method fails to solve any task as it lacks any explicit way to encourage exploration.
Similarly, RND and the Laplacian decomposition \citep{burda2018exploration} methods only manage to solve \textit{one} DMHard-8 task. This can be attributed to the fact that the exploration bonus in RND does not include any representation learning component which is essential in the rich-observation setting. Whereas, the Laplacian decomposition method includes representation learning loss but has no explicit procedure to encourage exploration.  
De-Rex provides a single, unified, procedure for both representation learning and estimating bonuses to encourage exploration.
When only representation loss is used, \textit{De-Rex Representation} is still able to solve nearly \textit{two} tasks. When both the representation loss and exploration bonuses are used, \textit{De-Rex Exploration} is able to solve nearly \textit{four} tasks.
Appendix \ref{sec:moreres} contains additional ablation studies on the  DM-Hard-8 tasks using different hyperparameters combinations.

\section{Discussion and Conclusion}

We proposed an SVD based representation and exploration procedure. Our approach has three key
advantages. (a) Simplicity: The proposed representation learning loss is completely model-free and the exploration bonuses can be constructed readily by computing the norms of the representations. (b) Theory and empirical results: the proposed approach is grounded in the fundamental framework of SVD, and shows how SVD can be used not only for representation learning but also for exploration in challenging environments. (c) Extensibility: Our method can not only be used with other matrices of interest but also the proposed procedure can be used as an auxiliary loss with any future baseline (deep) RL algorithm.

\textbf{Limitations:} 
While we took the initial steps to lay out the analysis in the tabular setting, we believe an important future direction is to further our understanding by theoretically analyzing the function approximation setting as well. Further, investigating how to \textit{explicitly} account for rewards during representation learning could be fruitful (using the proposed method as an auxiliary loss induces \textit{implicit} dependency on rewards as the losses from policy/value functions also shape the representation).  

\section{Acknowledgement}

The work was done when YC was an intern at DeepMind. We thank Mohammad Gheshlaghi Azar, Hado van Hasselt, Tom Schaul, Mark Rowland, Bernardo Avila Pires, Bilal Piot, Daniele Calandriello, Michal Valko, David Abel, Marlos Machado, Doina Precup, and Michael Bowling, for several insightful discussions.

\clearpage

\bibliography{mybib}
\bibliographystyle{abbrvnat}

\clearpage
\appendix

\setcounter{thm}{0}
\onecolumn

\section*{\centering Representations and Exploration for Deep Reinforcement Learning \\ using Singular Value Decomposition \\(Supplementary Material)}

\section{Extended Related Work}

\label{sec:apx:related}

\paragraph{Auxiliary Losses and Exploration Bonuses:}
Several prior works have analyzed properties of states features for TD learning \citep{parr2007analyzing,parr2008analysis,song2016linear}, studied different notions of optimality  \citep{nachum2018near,bellemare2019geometric,allen2021learning}, and the effect of auxiliary tasks on representation learning \citep{dabney2021value, lyle2021effect,lyle2022understanding,lyle2022learning}.
Specifically, similar to D-REX, auxiliary losses have been shown to help obtain higher expected returns quicker by learning better representations \citep{jaderberg2016reinforcement,mirowski2016learning,hessel2019multi,gregor2019shaping,guo2020bootstrap,erraqabi2022temporal}.
However, unlike these methods, D-REX neither requires negative sampling, nor any target networks for representation learning nor does it require reconstruction at the raw observation or representation level.

Auxiliary losses can also provide intrinsic motivation signals to drive exploration \citep{houthooft2016vime,pathak2017curiosity,burda2018exploration,azar2019world,badia2020never,guo2022byol}.
Such intrinsic bonuses are also related to pseudo-counts of state visitations \citep{bellemare2016unifying,ostrovski2017count,tang2017exploration}.
However, unlike these methods, D-REX does not require any sophisticated density estimation procedure and can directly use the norms of the learned representations to estimate pseudo-counts.
We refer the readers to  the work by \citet{weng2020exploration} for an
exhaustive review on various exploration strategies.

\paragraph{Decomposition Based Methods} 
Following the seminal work by \citet{mahadevan2005proto,mahadevan2007proto} several researchers have explored the utility of spectral representation and decompositions in decision making in MDPs \citep{petrik2007analysis,wang2021towards,behzadian2019fast},   POMDPs \citep{hsu2012spectral,roy2005finding,azizzadenesheli2016reinforcement,boots2011closing,grinberg2018learning}.
PAC learnability and regret guarantees have also been developed for MDPs \citep{misra2020kinematic,agarwal2020flambe} and POMDPs \citep{wang2022embed,kwon2021rl,guo2016pac,liu2022partially,zhan2022pac} using various rank conditions on the dynamics.
Complimentary to these results, our method is aimed at solving large-scale environments that present challenges both for representation learning and exploration.

\section{Proofs}

\begin{thm}
    If the eigenvalues of ${\bf P_\pi} \in \mathbb R^{|\mathcal S|\times |\mathcal S|}$
    are real and distinct,
    \footnote{we consider eigenvalues to be real and distinct to simplify the choice of `top-k' eigenvalues.} then for any $k$, the top-k eigenvectors of  ${\bf P_\pi}$ and $({\bf I} - \gamma {\bf P_\pi})^{-1}$ are the same. 
\end{thm}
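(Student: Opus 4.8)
The plan is to diagonalize $\mathbf{P_\pi}$ and then track what happens to its eigenvectors and eigenvalues under the map $\mathbf{M} \mapsto (\mathbf{I} - \gamma \mathbf{M})^{-1}$. Because the eigenvalues of $\mathbf{P_\pi}$ are assumed real and distinct, $\mathbf{P_\pi}$ is diagonalizable: there is an invertible $\mathbf{Q}$ whose columns $q_1, \dots, q_{|\mathcal S|}$ are eigenvectors, together with a diagonal matrix $\mathbf{\Lambda} = \mathrm{diag}(\lambda_1, \dots, \lambda_{|\mathcal S|})$, such that $\mathbf{P_\pi} = \mathbf{Q}\mathbf{\Lambda}\mathbf{Q}^{-1}$.

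First I would substitute this factorization to obtain $\mathbf{I} - \gamma\mathbf{P_\pi} = \mathbf{Q}(\mathbf{I} - \gamma\mathbf{\Lambda})\mathbf{Q}^{-1}$, and hence $(\mathbf{I} - \gamma\mathbf{P_\pi})^{-1} = \mathbf{Q}(\mathbf{I} - \gamma\mathbf{\Lambda})^{-1}\mathbf{Q}^{-1}$. This single identity already shows that $(\mathbf{I} - \gamma\mathbf{P_\pi})^{-1}$ shares \emph{exactly} the same eigenvectors $q_i$ as $\mathbf{P_\pi}$, with each eigenvalue transformed from $\lambda_i$ to $1/(1 - \gamma\lambda_i)$. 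To make this step legitimate I would first verify invertibility: since $\mathbf{P_\pi}$ is row-stochastic, its (real) eigenvalues lie in $[-1,1]$, so with $\gamma \in [0,1)$ each diagonal entry satisfies $1 - \gamma\lambda_i \ge 1 - \gamma > 0$. Thus $\mathbf{I} - \gamma\mathbf{\Lambda}$, and therefore $\mathbf{I} - \gamma\mathbf{P_\pi}$, is invertible, and the inverse of the factorization is exactly as claimed.

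The remaining --- and only genuinely substantive --- step is to argue that ``top-$k$'' selects the same set of eigenvectors for both matrices. The top-$k$ eigenvectors of $\mathbf{P_\pi}$ are the $q_i$ belonging to the $k$ largest $\lambda_i$, whereas the top-$k$ eigenvectors of $(\mathbf{I} - \gamma\mathbf{P_\pi})^{-1}$ are those belonging to the $k$ largest $1/(1-\gamma\lambda_i)$. I would note that $\phi(\lambda) \coloneqq 1/(1 - \gamma\lambda)$ has derivative $\phi'(\lambda) = \gamma/(1-\gamma\lambda)^2 > 0$ on the range $\lambda \in [-1,1] \subset (-\infty, 1/\gamma)$, so $\phi$ is strictly increasing there. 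Consequently the ordering of the $\lambda_i$ is preserved by $\phi$: the $k$ largest $\lambda_i$ map precisely to the $k$ largest $1/(1-\gamma\lambda_i)$. Since the eigenvectors are literally identical and the selected index set is the same, the top-$k$ eigenvectors coincide.

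I expect the only place requiring care to be this monotonicity/ordering argument, and in particular pinning down the interval on which $\phi$ is strictly increasing --- which relies on the stochasticity bound $|\lambda_i| \le 1$ together with $\gamma < 1$ to keep $\lambda$ away from the pole at $\lambda = 1/\gamma$. The degenerate boundary case $\gamma = 0$, where $(\mathbf{I} - \gamma\mathbf{P_\pi})^{-1} = \mathbf{I}$ and the notion of top-$k$ eigenvectors collapses, can be excluded or dispatched separately. Everything else follows directly from the diagonalization identity.
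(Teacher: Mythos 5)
Your proof is correct and follows essentially the same route as the paper's: both establish that $(\mathbf{I}-\gamma\mathbf{P_\pi})^{-1}$ shares the eigenvectors of $\mathbf{P_\pi}$ with each eigenvalue transformed by $\lambda \mapsto 1/(1-\gamma\lambda)$, and both conclude by monotonicity of this map on $[-1,1]$. The only difference is in how the shared-eigenvector fact is derived --- you use the similarity transform $\mathbf{Q}(\mathbf{I}-\gamma\mathbf{\Lambda})^{-1}\mathbf{Q}^{-1}$ whereas the paper applies the Neumann series $\sum_{t}\gamma^{t}\mathbf{P_\pi}^{t}$ to a single eigenvector --- and your version is in fact slightly more careful about justifying invertibility and the strict monotonicity of $\phi$.
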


\begin{proof}
\label{apx:ebfproof}
We first show that any eigenvector of $\bf P_\pi$ is also an eigenvector of $({\bf I}-\gamma{\bf P_\pi})^{-1}$. Subsequently, we will show that the ordering of these eigenvectors, based on their associated eigenvalues, are also the same.

Let $v \in \mathbb R^{|S|\times 1}$ be an eigenvector of $\bf P_\pi$ and $\lambda$ be its associated eigenvalue, i.e.,
\begin{align}
   {\bf P_\pi} v = \lambda v.
\end{align}
Now it can be observed that the eigenvectors for $\bf P_\pi$ and $({\bf I} - \gamma {\bf P_\pi})^{-1}$ are the same,
\begin{align}
    ({\bf I-}\gamma{\bf{\bf P_\pi}})^{-1}v &= \left(\sum_{t=0}^\infty \gamma^t{\bf{\bf P_\pi}}^t\right) v
    = \sum_{t=0}^\infty \left(\gamma\lambda\right)^t v.
\end{align}
Further, as $0 \leq\gamma<1$ and $|\lambda|\leq 1$,
\begin{align}
    ({\bf I}-\gamma{\bf{\bf P_\pi}})^{-1}v &= \frac{1}{1 - \gamma \lambda}v.
\end{align}
%
Let $\lambda ' \coloneqq 1/(1 - \gamma \lambda)$ be the eigenvalue associated with $v$ for the matrix $({\bf I}-\gamma{\bf{\bf P_\pi}})^{-1}$. As $|\lambda| \leq 1$,
%
$\lambda$ and $\lambda'$ are monotonously related, i.e., larger values of $\lambda$ imply larger values of $\lambda'$ and the ordering of the top-k eigenvectors for both $\bf P_\pi$ and $({\bf I}-\gamma{\bf P_\pi})^{-1}$ are the same.

\end{proof}
\begin{thm}
    ${\bf A_n}$ is an unbiased and a consistent estimator of ${\bf D}{\bf P_\pi}$, i.e.,
    \begin{align}
       \forall i,j, \quad \mathbb E\left[{\bf A_n}_{[i,j]}\right] = {\bf D}{\bf P_\pi}_{[i,j]}, && {\bf A_n}_{[i,j]} \overset{\text{a.s.}}{\longrightarrow} {\bf D}{\bf P_\pi}_{[i,j]}.
    \end{align}
\end{thm}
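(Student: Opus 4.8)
The plan is to reduce the matrix-valued claim to a statement about a single entry, and then dispatch the two halves with elementary properties of indicator random variables and the strong law of large numbers. First I would unpack the summands: since $x_i$ is the one-hot encoding of the sampled state $s_i$ and $y_i$ is the one-hot encoding of the sampled next state $s'_i$, the outer product $x_i y_i^\top$ is the $|\mathcal S|\times|\mathcal S|$ matrix whose single nonzero entry is a $1$ in position $(s_i, s'_i)$. Equivalently, for any fixed pair of indices $(j,k)$,
\begin{align}
    \big(x_i y_i^\top\big)_{[j,k]} = \mathbbm{1}\{s_i = j,\ s'_i = k\},
\end{align}
so that ${\bf A_n}_{[j,k]} = \tfrac{1}{n}\sum_{i=1}^n \mathbbm{1}\{s_i = j,\ s'_i = k\}$ is exactly the empirical frequency of the transition $j \to k$ in $\mathcal D$. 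This identification is what makes both claims routine.

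For unbiasedness I would take the expectation of a single summand using the tower rule applied to the data-generating process, namely that each tuple is produced by first drawing $s_i \sim d$ and then $s'_i \sim {\bf P_\pi}(\cdot\,|\,s_i)$. This gives
\begin{align}
    \mathbb E\big[\mathbbm{1}\{s_i = j,\ s'_i = k\}\big] = \Pr(s_i = j)\,\Pr(s'_i = k \mid s_i = j) = d(j)\,{\bf P_\pi}_{[j,k]} = {\bf D}{\bf P_\pi}_{[j,k]},
\end{align}
where the last equality uses ${\bf D} = \texttt{diag}(d)$. Linearity of expectation over the $n$ identically distributed summands then yields $\mathbb E[{\bf A_n}_{[j,k]}] = {\bf D}{\bf P_\pi}_{[j,k]}$ for every $(j,k)$. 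For consistency I would note that, under the standing assumption that the tuples in $\mathcal D$ are sampled i.i.d., the summands $\{\mathbbm{1}\{s_i = j,\ s'_i = k\}\}_{i=1}^n$ are i.i.d. Bernoulli variables bounded in $[0,1]$ with the common mean ${\bf D}{\bf P_\pi}_{[j,k]}$ just computed; since they are integrable, the strong law of large numbers applies entrywise and gives ${\bf A_n}_{[j,k]} \overset{\text{a.s.}}{\longrightarrow} {\bf D}{\bf P_\pi}_{[j,k]}$. Because there are only finitely many entries, taking the intersection of the finitely many probability-one events shows the almost-sure convergence holds for the full matrix simultaneously.

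The computation is essentially mechanical, so there is no deep obstacle; the only point requiring care is to be explicit about the sampling mechanism, i.e., that $s_i$ is distributed according to $d$, that $s'_i$ follows the induced kernel ${\bf P_\pi}(\cdot\,|\,s_i)$, and that the tuples are drawn i.i.d. Both the mean identity and the applicability of the strong law hinge entirely on this assumption, so the main task is to state it cleanly rather than to carry out any nontrivial argument.
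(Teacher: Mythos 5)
Your proposal is correct and follows essentially the same route as the paper: both reduce each entry of ${\bf A_n}$ to an empirical frequency of indicator variables, compute the mean via the factorization $\Pr(S=s_i,S'=s_j)=d(s_i){\bf P_\pi}_{[i,j]}$, and invoke the strong law of large numbers entrywise for consistency. The only cosmetic difference is that the paper carries out the unbiasedness computation in matrix form (via $\mathbb E[{\bf Y}\mid {\bf X}]={\bf X}{\bf P_\pi}$ and $\mathbb E[\tfrac{1}{n}{\bf X}^\top{\bf X}]={\bf D}$) whereas you work entrywise throughout; your explicit statement of the i.i.d.\ sampling assumption is a welcome clarification rather than a deviation.
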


\begin{proof}
\label{apx:unbconsproof}

\textbf{Unbiased:}
\begin{align}
    \mathbb E\left[ \bf A_n \right] &=     \mathbb E\left[\frac{1}{n} \bf X^\top Y \right]
   = \mathbb E\left[\frac{1}{n} \bf X^\top \mathbb E[Y | X] \right].
\end{align}
In the tabular setting, as $\bf  \mathbb E[Y | X]$ is equal to the distribution of the next state $Y$ given interaction using $\pi$ from the state $X$, therefore $\bf  \mathbb E[Y | X] = \bf P_\pi$,
\begin{align}
    \mathbb E\left[\bf A_n \right] &\overset{(a)}{=} \mathbb E\left[\frac{1}{n} \bf X^\top X P_\pi \right]
    = \mathbb E\left[\frac{1}{n}\bf  X^\top X \right] \bf P_\pi. \label{eqn:int}
\end{align}
Now observe that because of one-hot encoding, $\frac{1}{n} \bf X^\top X$ is a diagonal matrix, where $i^{\text{th}}$ diagonal entry consists of the normalized visitation counts for the state $i$.
Therefore,
\begin{align}
    \mathbb E\left[\frac{1}{n}\bf  X^\top X \right]_{[i,i]} &= \mathbb E\left[\frac{1}{n} \sum_{j=1}^n  x_j x_j^\top \right]_{[i,i]}
    \\
    &= \mathbb E\left[\frac{1}{n} \sum_{j=1}^n  \mathrm 1_{\{s_j=s_i \wedge s_j=s_i\}} \right]
    \\
    &= \mathbb E\left[\mathrm 1_{\{S=s_i\}} \right]
    \\
    &= d(s_i).
\end{align}
Therefore, $\mathbb E\left[\frac{1}{n}\bf  X^\top X \right] = \bf D$, and  using \eqref{eqn:int},
\begin{align}
    \mathbb E\left[\bf A_n \right]
    &= \bf DP_\pi.
\end{align}

\textbf{Consistency:}
\begin{align}
    {\bf DP_\pi}_{[i,j]} &= {\bf D}_{[i,i]} {\bf P_\pi}_{[i,j]}
    \\
    &= d(s_i)\Pr(S'=s_j|S=s_i;\pi)
    \\
    &= \Pr(S'=s_j,S=s_i;\pi)
\end{align}
\begin{align}
    {\bf A_n}_{[i,j]} &= \frac{1}{n} {\bf X^\top Y}_{[i,j]}
   = \frac{1}{n}\sum_{k=1}^n (x_k y_k^\top)_{[i,j]}
\end{align}
As $x_k$ and $y_k$ are one-hot encoded vectors for the states $S_k$ and $S_k'$,
\begin{align}
    {\bf A_n}_{[i,j]} &= \frac{1}{n}\sum_{k=1} \mathrm 1_{\{S_k=s_i \wedge S_k'=s_j\}} 
\end{align}
Let $X_k = \mathrm 1_{\{S_k=s_i \wedge S_k'=s_j\}}$, then we know from the strong law of large numbers,
\begin{align}
\frac{1}{n}\sum_{k=0}^\infty X_k \overset{a.s.}{\longrightarrow} \mathbb E[\mathrm 1_{\{S_k=s_i \wedge S_k'=s_j\}} ] = \Pr(S'=s_j,S=s_i;\pi)
\end{align}    
Therefore, 
\begin{align}
\forall i,j, \quad    {\bf A_n}_{[i,j]} &\overset{a.s}{\longrightarrow} \Pr(S'=s_j,S=s_i;\pi)
\\
&= {\bf DP_\pi}_{[i,j]}.
\end{align}
\end{proof}

\begin{thm}
    If ${\bf P_\pi} {\bf P_\pi} ^\top$ and $\bf D$ are invertible,
    then for $\alpha_{s} \coloneqq ({\bf P_\pi} {\bf P_\pi}^\top)^{-1}_{[s,s]}$,
    \begin{align}
        \big\lVert f_{\bf U}(x_s)\big\rVert _{\bf \Lambda^{-1}}^2 = \frac{\alpha_s}{d(s)^2}. 
    \end{align}
\end{thm}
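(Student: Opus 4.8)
The plan is to unfold the weighted norm directly in terms of the SVD factors of ${\bf D}{\bf P_\pi}$ and then recognize that the matrix controlling the norm is precisely the Gram matrix $({\bf D}{\bf P_\pi})({\bf D}{\bf P_\pi})^\top$, which can be inverted cleanly using the orthogonality of the full matrix $\bf U$. Since $x_s$ is one-hot, $f_{\bf U}(x_s) = {\bf U}^\top x_s$ is the $s$-th row of $\bf U$, and ${\bf \Lambda}^{-1} = {\bf \Sigma}^{-2}$, so the whole quantity collapses to a single diagonal entry at the end.

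First I would write, using ${\bf \Lambda}^{-1}={\bf \Sigma}^{-2}$ and the fact that the full $\bf U$ is orthogonal (so ${\bf U}^{-1}={\bf U}^\top$ and the inverse may be pushed through the product),
\begin{align}
\big\lVert f_{\bf U}(x_s)\big\rVert_{\bf \Lambda^{-1}}^2 = x_s^\top {\bf U}{\bf \Sigma}^{-2}{\bf U}^\top x_s = x_s^\top \left({\bf U}{\bf \Sigma}^2 {\bf U}^\top\right)^{-1} x_s.
\end{align}
Next, from the decomposition ${\bf D}{\bf P_\pi} = {\bf U}{\bf \Sigma}{\bf V}^\top$ together with ${\bf V}^\top {\bf V}={\bf I}$, I would identify ${\bf U}{\bf \Sigma}^2{\bf U}^\top = ({\bf D}{\bf P_\pi})({\bf D}{\bf P_\pi})^\top$, and since $\bf D$ is diagonal this equals ${\bf D}{\bf P_\pi}{\bf P_\pi}^\top {\bf D}$. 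Substituting and using invertibility of each factor (guaranteed by the hypotheses on $\bf D$ and ${\bf P_\pi}{\bf P_\pi}^\top$) gives
\begin{align}
\big\lVert f_{\bf U}(x_s)\big\rVert_{\bf \Lambda^{-1}}^2 = x_s^\top \left({\bf D}{\bf P_\pi}{\bf P_\pi}^\top {\bf D}\right)^{-1} x_s = x_s^\top {\bf D}^{-1}\left({\bf P_\pi}{\bf P_\pi}^\top\right)^{-1}{\bf D}^{-1} x_s.
\end{align}

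Finally I would exploit that ${\bf D}^{-1}$ is diagonal with $s$-th entry $1/d(s)$ and that $x_s$ is one-hot, so ${\bf D}^{-1}x_s = d(s)^{-1}x_s$; this pulls out a factor $d(s)^{-2}$ and reduces the remaining quadratic form to the $(s,s)$ entry $({\bf P_\pi}{\bf P_\pi}^\top)^{-1}_{[s,s]}=\alpha_s$, yielding $\alpha_s/d(s)^2$ as claimed. The computation is essentially routine; the only points requiring care are the two inverse manipulations — passing $(\cdot)^{-1}$ through ${\bf U}{\bf \Sigma}^2{\bf U}^\top$, which needs $\bf U$ to be the \emph{full} orthogonal matrix rather than a truncated top-$k$ block, and confirming that $\bf \Sigma$ (equivalently $\bf \Lambda$) is genuinely invertible. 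The latter follows because ${\bf U}{\bf \Sigma}^2{\bf U}^\top = {\bf D}{\bf P_\pi}{\bf P_\pi}^\top{\bf D}$ is a product of invertible matrices under the stated assumptions, so no singular value of ${\bf D}{\bf P_\pi}$ vanishes.
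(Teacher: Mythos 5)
Your proof is correct and follows essentially the same route as the paper: both identify ${\bf U\Lambda U}^\top = ({\bf DP_\pi})({\bf DP_\pi})^\top = {\bf DP_\pi P_\pi}^\top{\bf D}$, invert it to get ${\bf U\Lambda}^{-1}{\bf U}^\top = {\bf D}^{-1}({\bf P_\pi P_\pi}^\top)^{-1}{\bf D}^{-1}$, and evaluate the quadratic form at the one-hot $x_s$. The only cosmetic difference is that the paper routes the inversion through the Moore--Penrose pseudo-inverse computed two ways, whereas you use the ordinary inverse directly (justified by the stated invertibility assumptions), which is arguably a touch cleaner.
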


\begin{proof}
\label{apx:expproof}
First notice that,
\begin{align}
    \bf DP_\pi &= \bf U\Sigma V^\top 
    \\
    \bf DP_\pi P_\pi ^\top D^\top &= \bf U\Sigma V^\top V \Sigma^\top U^\top
    \\
    &= \bf U\Sigma^2 U^\top, &  \because \bf V^\top V =\bf I
    \\
        &= \bf U\Lambda U^\top & \text{where,} \,\, \bf \Lambda \coloneqq \bf \Sigma^2.
\end{align}

Recall that for any matrix $M$ with full column-rank, it's pseudo-inverse is given by $\bf M^\dagger \coloneqq \bf (M^\top M)^{-1}M^\top$ such that $\bf M^\dagger M = \bf I$.
As $\bf (D P_\pi P_\pi ^\top D^\top)^\top =\bf D P_\pi P_\pi ^\top D^\top $, 
\begin{align}
\bf (DP_\pi P_\pi ^\top D^\top)^\dagger &= \bf (D P_\pi P_\pi ^\top D^\top D P_\pi P_\pi ^\top D^\top )^{-1} D P_\pi P_\pi^\top D^\top \\
&= \bf (U\Lambda U^\top U\Lambda U^\top)^{-1} U\Lambda U^\top\\
&\overset{(a)}{=} \bf (U \Lambda^2 U^\top)^{-1} U\Lambda U^\top\\
&\overset{(b)}{=} \bf U \Lambda^{-2} U^\top U \Lambda U^\top \\
&= \bf U \Lambda^{-1} U^\top, \label{eqn:part1}
\end{align}
where (a) follows from the fact that $\bf U^\top U = \bf I$ and (b) follows from the fact that $\bf U^{-1} = \bf U^\top$.

Alternatively, under the assumption that $\bf P_\pi P_\pi ^\top$ and $\bf D$ are invertible,
\begin{align}
    \bf (DP_\pi P_\pi^\top D^\top )^\dagger
    &= \bf \left((D P_\pi P_\pi^\top D^\top)^\top (D P_\pi P_\pi^\top D^\top) \right)^{-1} (D P_\pi P_\pi^\top D^\top)^\top %
    \\
    &= \bf \left(D P_\pi P_\pi^\top D^2  P_\pi P_\pi^\top D \right)^{-1} D P_\pi P_\pi^\top D
    \\
    &= \bf D^{-1}( P_\pi P_\pi^\top)^{-1} D^{-2} ( P_\pi P_\pi^\top)^{-1} D^{-1} D P_\pi P_\pi^\top D
    \\
    &= \bf D^{-1}( P_\pi P_\pi^\top)^{-1} D^{-1}.  \label{eqn:part2}
\end{align}

Combining \eqref{eqn:part1} and \eqref{eqn:part2},
\begin{align}
  \bf  U \Lambda^{-1} {\bf U}^\top &= \bf D^{-1}( P_\pi P_\pi^\top)^{-1} D^{-1}.  
\end{align}
In the tabular setting, $x_{s} \in \mathbb R^{|S|\times 1}$ correspond to the one-hot feature encoding for the state $s$, and the linear function $f_U(x_{s}) = {\bf U}^\top x_{s}$
\begin{align}
    \langle f_U(x_{s}), f_U(x_{s}) \rangle_{\bf \Lambda^{-1}} &= x_{s}^\top  {\bf U} \Lambda^{-1}  {\bf U}^\top x_{s}  
    \\
    &= x_{s}^\top   {\bf D}^{-1}( {\bf P_\pi P_\pi}^\top)^{-1}  {\bf D}^{-1}  x_{s}  
    \\ 
    &\overset{(a)}{=} \frac{\alpha_{s}}{d(s)^2},
\end{align}
where, $\alpha_{s} = ( {\bf P_\pi P_\pi}^\top)^{-1}_{[s,s]}$, and the simplification in step (a) follows because $x_s$ is one-hot and $\bf D$ is a diagonal matrix.
\end{proof}

\begin{thm}
Gradient of $\mathcal L_{\operatorname{diag}}({\bf \widehat \Sigma}(\theta))$ is an unbiased estimator of  $\mathcal L_{\operatorname{diag}}({\bf  \Sigma}(\theta))$, and  gradient of $\mathcal L_{\operatorname{off}}({\bf \widehat \Sigma}(\theta))$ is in general a biased estimate of $\mathcal L_{\operatorname{off}}({\bf \Sigma}(\theta))$ , i.e.,
    \begin{align}
    \mathbb E\left[\partial_\theta \mathcal L_{\operatorname{diag}}\left(\widehat {\bf \Sigma}(\theta) \right)\right] &= 
    \partial_\theta \mathcal L_{\operatorname{diag}}({\bf \Sigma}(\theta)), 
    \\
    \mathbb E\left[\partial_\theta \mathcal L_{\operatorname{off}}\left(\widehat {\bf \Sigma}(\theta) \right)\right] &{\,\, \color{red}\neq} \,\,
    \partial_\theta \mathcal L_{\operatorname{off}}({\bf \Sigma}(\theta)), 
\end{align}
where the expectation is over the randomness of the mini-batch.
\end{thm}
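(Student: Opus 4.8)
The plan is to handle the two claims separately, exploiting the fact that $\mathcal L_{\operatorname{diag}}$ is \emph{linear} in the entries of ${\bf \Sigma}(\theta)$ whereas $\mathcal L_{\operatorname{off}}$ is \emph{quadratic}. Throughout I would use that the mini-batch in \eqref{eqn:batchC} is drawn (e.g.\ uniformly from the $n$ samples) so that each summand $f_{\theta_1}(s_m) g_{\theta_2}(s'_m)^\top$ has expectation ${\bf \Sigma}(\theta)$, and hence $\mathbb E[\widehat{\bf \Sigma}(\theta)_{[i,j]}] = {\bf \Sigma}(\theta)_{[i,j]}$ entrywise; by the same token $\mathbb E[\partial_\theta \widehat{\bf \Sigma}(\theta)_{[i,j]}] = \partial_\theta {\bf \Sigma}(\theta)_{[i,j]}$. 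The interchange of $\partial_\theta$ and $\mathbb E$ needed below is justified by dominated convergence, which holds since the mini-batch index set is finite and $f_{\theta_1}, g_{\theta_2}$ are assumed smooth in $\theta$.

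For the diagonal term, I would observe that $\mathcal L_{\operatorname{diag}}(\widehat{\bf \Sigma}(\theta)) = \tfrac{1}{k}\sum_{i=1}^k \widehat{\bf \Sigma}(\theta)_{[i,i]}$ is a finite linear combination of the estimator's entries, so both differentiation and expectation pass through the sum. Consequently $\mathbb E[\partial_\theta \mathcal L_{\operatorname{diag}}(\widehat{\bf \Sigma}(\theta))] = \partial_\theta \tfrac{1}{k}\sum_{i=1}^k \mathbb E[\widehat{\bf \Sigma}(\theta)_{[i,i]}] = \partial_\theta \tfrac{1}{k}\sum_{i=1}^k {\bf \Sigma}(\theta)_{[i,i]} = \partial_\theta \mathcal L_{\operatorname{diag}}({\bf \Sigma}(\theta))$, which settles unbiasedness of the diagonal gradient.

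For the off-diagonal term, I would apply the chain rule to each squared entry, $\partial_\theta \widehat{\bf \Sigma}(\theta)_{[i,j]}^2 = 2\,\widehat{\bf \Sigma}(\theta)_{[i,j]}\,\partial_\theta \widehat{\bf \Sigma}(\theta)_{[i,j]}$, and take the expectation of this \emph{product}. Writing $\mathbb E[XY] = \mathbb E[X]\mathbb E[Y] + \operatorname{Cov}(X,Y)$ with $X = \widehat{\bf \Sigma}(\theta)_{[i,j]}$ and $Y = \partial_\theta \widehat{\bf \Sigma}(\theta)_{[i,j]}$, and substituting the two expectations above, the expected mini-batch gradient equals the true gradient $\partial_\theta \mathcal L_{\operatorname{off}}({\bf \Sigma}(\theta)) = \tfrac{2}{k^2-k}\sum_{i\neq j} {\bf \Sigma}(\theta)_{[i,j]}\,\partial_\theta {\bf \Sigma}(\theta)_{[i,j]}$ plus a residual $\tfrac{2}{k^2-k}\sum_{i\neq j}\operatorname{Cov}(\widehat{\bf \Sigma}(\theta)_{[i,j]}, \partial_\theta \widehat{\bf \Sigma}(\theta)_{[i,j]})$. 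Since $X$ and $Y$ are computed from the \emph{same} mini-batch, this covariance is generically nonzero and vanishes only in degenerate cases (e.g.\ $b=n$, or summands constant across samples), which is precisely the double-sampling/residual-gradient phenomenon alluded to after the theorem.

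The main obstacle is upgrading the identification of this covariance residual to a rigorous ``$\neq$ in general'' claim, rather than merely an ``often nonzero'' remark. To close this cleanly I would exhibit a minimal counterexample: a toy instance with $k=2$ and two data points, with an explicit $\theta$-dependence chosen so that $\operatorname{Cov}(\widehat{\bf \Sigma}(\theta)_{[1,2]}, \partial_\theta \widehat{\bf \Sigma}(\theta)_{[1,2]}) \neq 0$ at some parameter value, thereby exhibiting a concrete $\theta$ at which the bias is strictly nonzero and justifying the qualifier ``in general.''
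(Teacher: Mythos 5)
Your proof is correct, and the diagonal half is identical to the paper's: both arguments rest on $\mathcal L_{\operatorname{diag}}$ being linear in the entries of $\widehat{\bf \Sigma}(\theta)$, so that expectation and differentiation commute through the sum. For the off-diagonal half you take a genuinely different route. The paper specializes to mini-batch size $b=1$, writes out $\mathbb E\bigl[\partial_\theta \widehat{\bf \Sigma}(\theta)_{[i,j]}^2\bigr]$ explicitly, and observes that the step replacing $\mathbb E[X^2]$ by $\mathbb E[X]^2$ fails; you instead work at general $b$, apply the chain rule to get $2\,\widehat{\bf \Sigma}(\theta)_{[i,j]}\,\partial_\theta\widehat{\bf \Sigma}(\theta)_{[i,j]}$, and identify the bias exactly as $\tfrac{2}{k^2-k}\sum_{i\neq j}\operatorname{Cov}\bigl(\widehat{\bf \Sigma}(\theta)_{[i,j]},\,\partial_\theta\widehat{\bf \Sigma}(\theta)_{[i,j]}\bigr)$. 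Your version buys a closed-form expression for the bias that holds for any batch size and makes transparent why the double-sampling fix of \eqref{eqn:svdl3} works (independent batches kill exactly this covariance); the paper's version buys brevity at the cost of only covering $b=1$. Your closing worry is fair but applies equally to the paper: its $b=1$ argument also asserts ``$\neq$'' without exhibiting a concrete $f$, $g$, $\theta$ at which the Jensen gap is strict, so your proposed explicit $k=2$ counterexample would, if carried out, make the ``in general'' qualifier more rigorous than the published proof rather than merely matching it.
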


\label{apx:biasproof}

\begin{proof}

We re-write \eqref{eqn:fullC} as
\begin{align}
    {\bf \Sigma}(\theta) \coloneqq \mathbb E \left[ f_{\theta_1}(S) g_{\theta_2}(S')^\top\right],
\end{align}
where the expectation is over the distribution specified by the $n$ sampled transition tuples in $\mathcal D$, i.e., ${\bf \Sigma} (\theta)$ is the sample average of $f_{\theta_1}(s_i) g_{\theta_2}(s_i')^\top$ across the $n$ samples in $\mathcal D$.

From \eqref{eqn:svdl2} and \eqref{eqn:batchC},
\begin{align}
    \mathbb E\left[\partial_\theta \mathcal L_{\operatorname{diag}}\left(\widehat {\bf \Sigma}(\theta) \right)\right] &=  \mathbb E \left[
    \frac{1}{k} \sum_{i=1}^k  \partial_\theta \widehat {\bf \Sigma}(\theta)_{[i,i]} \right]
    \\
    &\overset{(a)}{=}
    \frac{1}{k} \sum_{i=1}^k  \partial_\theta \mathbb E \left[ \widehat {\bf \Sigma}(\theta)_{[i,i]} \right]
    \\
    &=
    \frac{1}{k} \sum_{i=1}^k  \partial_\theta {\bf \Sigma}(\theta)_{[i,i]}
    \\
    &= \partial_\theta \mathcal L_{\operatorname{diag}}({\bf \Sigma}(\theta)), 
\end{align}
where (a) follows because we are looking at the case where the sampling distribution is fixed.

In the following we use a counter-example to show that the gradient for the mini-batch version of off-diagonal based loss does not provide an unbiased estimator in general.
For this counter-example, let the mini-batch size $b=1$. Therefore,
\begin{align}
    \mathbb E\left[\partial_\theta \mathcal L_{\operatorname{off}}\left(\widehat {\bf \Sigma}(\theta) \right)\right] &= \mathbb E\left[  \frac{1}{k^2-k} \sum_{\substack{i,j=1\\ i\neq j}}^k \partial_\theta \widehat{\bf \Sigma}(\theta)_{[i,j]}^2 \right],
    \\
    &=  \frac{1}{k^2-k} \sum_{\substack{i,j=1\\ i\neq j}}^k \partial_\theta \mathbb E\left[ \widehat{\bf \Sigma}(\theta)_{[i,j]}^2 \right],
    \\
    &\overset{(b)}{=} \frac{1}{k^2-k} \sum_{\substack{i,j=1\\ i\neq j}}^k \partial_\theta \mathbb E\left[ f_{\theta_1}(S){ g_{\theta_2}(S')^\top_{[i,j]}}^2\right],
    \\
    &{\color{red}\neq } \frac{1}{k^2-k} \sum_{\substack{i,j=1\\ i\neq j}}^k \partial_\theta \mathbb E\left[ f_{\theta_1}(S) g_{\theta_2}(S')^\top_{[i,j]}\right]^{\color{red}2},
    \\
    &= \partial_\theta \mathcal L_{\operatorname{off}}\left({\bf \Sigma}(\theta) \right),
\end{align}
where the step (b) follows because the mini-batch size being considered is $1$, and the step marked in {\color{red} red} \textit{fails} because for a random variable $X$, in general $\mathbb E[X^2] \neq \mathbb E[X]^2$.

\end{proof}

\begin{thm}
Gradient $\partial_\theta \widehat{ \mathcal L}_{\operatorname{off}}(\theta)$ is an unbiased estimator of $\partial_\theta \mathcal L_{\operatorname{off}}({\bf \Sigma}(\theta))$, i.e.,
    $$
    \mathbb E\left[\partial_\theta \widehat{ \mathcal L}_{\operatorname{off}}(\theta) \right]  = 
    \partial_\theta \mathcal L_{\operatorname{off}}({\bf \Sigma}(\theta)). $$
\end{thm}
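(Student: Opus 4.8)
The plan is to exploit the stop-gradient semantics in \eqref{eqn:svdl3} together with the statistical independence of the two mini-batches that generate $\widehat{\bf \Sigma}(\theta)$ in \eqref{eqn:batchC} and $\widehat{\bf \widehat \Sigma}(\theta)$ in \eqref{eqn:batchC2}. First I would differentiate $\widehat{\mathcal L}_{\operatorname{off}}(\theta)$ term by term. Because $\operatorname{sg}(\cdot)$ freezes its argument, the gradient flows only through the un-frozen factor of each summand, giving
\[
\partial_\theta \widehat{\mathcal L}_{\operatorname{off}}(\theta) = \frac{1}{k^2-k}\sum_{\substack{i,j=1\\ i\neq j}}^k \Big( \operatorname{sg}\big(\widehat{\bf \widehat \Sigma}(\theta)_{[i,j]}\big)\, \partial_\theta \widehat{\bf \Sigma}(\theta)_{[i,j]} + \operatorname{sg}\big(\widehat{\bf \Sigma}(\theta)_{[i,j]}\big)\, \partial_\theta \widehat{\bf \widehat \Sigma}(\theta)_{[i,j]} \Big).
\]

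Next I would take the expectation over the randomness of both mini-batches. Since the batch producing $\widehat{\bf \Sigma}(\theta)$ is drawn independently from the one producing $\widehat{\bf \widehat \Sigma}(\theta)$, the expectation of each product factorizes into a product of expectations. Each mini-batch average is entrywise an unbiased estimate of ${\bf \Sigma}(\theta)$, and—interchanging $\partial_\theta$ with the finite-sum expectation exactly as in the unbiased half of \thref{thm:bias}—its gradient is an unbiased estimate of $\partial_\theta {\bf \Sigma}(\theta)$. Thus for each $(i,j)$ one gets $\mathbb E[\operatorname{sg}(\widehat{\bf \widehat \Sigma}(\theta)_{[i,j]})\, \partial_\theta \widehat{\bf \Sigma}(\theta)_{[i,j]}] = {\bf \Sigma}(\theta)_{[i,j]}\, \partial_\theta {\bf \Sigma}(\theta)_{[i,j]}$, and symmetrically for the second term. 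Summing the two now-identical expectations yields $2\,{\bf \Sigma}(\theta)_{[i,j]}\,\partial_\theta {\bf \Sigma}(\theta)_{[i,j]} = \partial_\theta\big[{\bf \Sigma}(\theta)_{[i,j]}^2\big]$ by the chain rule; summing over $i\neq j$ and normalizing recovers $\partial_\theta \mathcal L_{\operatorname{off}}({\bf \Sigma}(\theta))$ from \eqref{eqn:svdl2}, which is the claim.

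The crux of the argument—and precisely the reason the naive single-batch estimator fails in \thref{thm:bias}—is the factorization $\mathbb E[\widehat{\bf \widehat \Sigma}_{[i,j]}\,\partial_\theta \widehat{\bf \Sigma}_{[i,j]}] = \mathbb E[\widehat{\bf \widehat \Sigma}_{[i,j]}]\,\mathbb E[\partial_\theta \widehat{\bf \Sigma}_{[i,j]}]$, which is legitimate only because the two batches are independent; were a single batch reused, the variance cross-terms would survive and reintroduce the $\mathbb E[X^2]$-versus-$\mathbb E[X]^2$ discrepancy. I therefore expect the only delicate points to be stating the independence of the two samples cleanly and justifying the $\partial_\theta$/$\mathbb E$ interchange. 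The latter is immediate here, since the expectation is taken over the fixed empirical measure on the $n$ transition tuples and hence is a finite sum to which differentiation commutes unconditionally.
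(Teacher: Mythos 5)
Your proposal is correct and follows essentially the same route as the paper's proof: differentiate through the non-stop-gradient factor in each summand, factorize the expectation of each product using the independence of the two mini-batches, apply entrywise unbiasedness of each batch estimate (and of its gradient), and recombine the two symmetric halves via the chain rule into $\partial_\theta\bigl[{\bf \Sigma}(\theta)_{[i,j]}^2\bigr]$. The paper organizes this as two separately-computed terms each contributing $\tfrac{1}{2}\partial_\theta \mathcal L_{\operatorname{off}}({\bf \Sigma}(\theta))$, but the substance is identical.
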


\begin{proof}

\begin{align}
    \widehat {\mathcal L}_{\operatorname{off}}(\theta) &\coloneqq \underbrace{\frac{1}{(k^2-k)} \sum_{\substack{i,j=1\\ i\neq j}}^k \widehat {\bf \Sigma}(\theta)_{[i,j]} \,\, \operatorname{sg}\left(\widehat{\bf \widehat \Sigma}(\theta)_{[i,j]}\right)}_{\color{blue} (I)} + \underbrace{\frac{1}{(k^2-k)} \sum_{\substack{i,j=1\\ i\neq j}}^k \widehat{\bf \widehat \Sigma}(\theta)_{[i,j]} \,\, \operatorname{sg}\left(\widehat{\bf \Sigma}(\theta)_{[i,j]}\right)}_{\color{red} (II)},
\end{align}

\begin{align}
    \mathbb E\left[\partial_\theta{\color{blue} (I)} \right] &\coloneqq \frac{1}{(k^2-k)} \sum_{\substack{i,j=1\\ i\neq j}}^k \mathbb E \left[ \partial_\theta \widehat {\bf \Sigma}(\theta)_{[i,j]} \,\, \operatorname{sg}\left(\widehat{\bf \widehat \Sigma}(\theta)_{[i,j]}\right) \right],
    \\
    &\overset{(a)}{=} \frac{1}{(k^2-k)} \sum_{\substack{i,j=1\\ i\neq j}}^k \mathbb E \left[ \partial_\theta \widehat {\bf \Sigma}(\theta)_{[i,j]} \right] \mathbb E\left[ \widehat{\bf \widehat \Sigma}(\theta)_{[i,j]} \right],
    \\
    &= \frac{1}{(k^2-k)} \sum_{\substack{i,j=1\\ i\neq j}}^k \mathbb \partial_\theta \mathbb E \left[  \widehat {\bf \Sigma}(\theta)_{[i,j]} \right] \mathbb E\left[ \widehat{\bf \widehat \Sigma}(\theta)_{[i,j]} \right],
    \\
    &= \frac{1}{(k^2-k)} \sum_{\substack{i,j=1\\ i\neq j}}^k \mathbb \partial_\theta  {\bf \Sigma}(\theta)_{[i,j]}  {\bf \Sigma}(\theta)_{[i,j]},
    \\
    &= \frac{1}{2} \partial_\theta \left( \frac{1}{(k^2-k)} \sum_{\substack{i,j=1\\ i\neq j}}^k  {\bf \Sigma}(\theta)_{[i,j]}^2\right),
    \\
    &= \frac{1}{2} \partial_\theta \mathcal L_{\operatorname{off}}\left({\bf \Sigma}(\theta) \right),
\end{align}
where step (a) follows because $\widehat {\bf \Sigma}(\theta)$ and $\widehat{\widehat {\bf \Sigma}}(\theta)$ are independently computed using different batches of data.
Similarly, it can be observed that $ \mathbb E\left[\partial_\theta{\color{blue} (I)} \right] =  \mathbb E\left[\partial_\theta{\color{red} (II)} \right]$.
Therefore,
\begin{align}
     \mathbb E\left[\partial_\theta \widehat{ \mathcal L}_{\operatorname{off}}(\theta) \right]  &= \mathbb E\left[\partial_\theta{\color{blue} (I)} \right] +  \mathbb E\left[\partial_\theta{\color{red} (II)} \right]
     \\
     &=     \partial_\theta \mathcal L_{\operatorname{off}}({\bf \Sigma}(\theta)).
\end{align}
\end{proof}

\section{Experimental details}

\label{apx:implement}

We provide further details on the algorithmic and implementation details of deep RL experiments in the paper.

\paragraph{DM-Lab-30 environments.} DM-Lab-30 is a set of visually complex maze navigation tasks \citep{beattie2016deepmind}. The only input
the agent receives is an image of the first-person view of the
environment, a natural language instruction (in some tasks),
and the reward.  Agents can provide multiple simultaneous actions to control movement (forward/back,
strafe left/right, crouch, jump), looking (up/down, left/right) and tagging (in laser
tag levels with opponent bots). DM-Lab-30 has proved challenging for model-free RL algorithms \citep{beattie2016deepmind,guo2020bootstrap}.

\paragraph{DM-Hard-8 environments: } DM-Hard-8 is a set of similar visually complex maze navigation tasks with sparse reward, and is commonly used as a test bed for exploration \citep{guo2022byol}.
This benchmark comprises of 8 hard exploration tasks aimed at  emphasizing the challenges encountered when learning from sparse rewards in 
a procedurally-generated 3-D world with partial observability, continuous control, and highly variable
initial conditions.
Each task requires reaching an apple in the environment through interaction with specific objects in its environment.
Only on reaching the apple provides a reward to the agent.
Being procedurally generated, properties
such as object shapes, colors, and positions are different in every episode. The agent sees only the
first-person view from its position.

\paragraph{Agent architecture.} Since the environment is partially observable, we adopt a recurrent network architecture following prior work such as \citep{beattie2016deepmind,song2019v,guo2020bootstrap}. In particular, the agent uses a convolutional neural network inside $\mathcal F_1 : \mathcal O \rightarrow \mathbb R^k$ to process the image in the input $o_t \in \mathcal O$ into latent embedding.
The observation contains (i) the force the agent’s hand is currently applying
as a fraction of total grip strength, a single scalar between 0 and 1; (ii) a boolean indicating
whether the hand is currently holding an object; (iii) the distance of the agent's hand from the
main body; (iv) the last action taken by the agent; each of these are embedded using a linear
projection to 20 dimensions followed by a ReLU. (v) the previous reward obtained by the
agent, passed by a signed hyperbolic transformation; (vi) a text instruction specific to the
task currently being solved, with each word embedded into 20 dimensions and processed
sequentially by an LSTM \citep{hochreiter1997long} to an embedding of size 64. All of these quantities, along with 
the output of $\mathcal F_1$, are concatenated and passed through a linear layer to an embedding of size 512.
An LSTM  $\mathcal F_2$ with the embedding size 512 processes each of these observation representations
sequentially to form the recurrent history representation.
The core output of the LSTM is the belief $b_t=\mathcal F_2(h_t)\in\mathbb{R}^k$. Using the notation from the main paper, we can understand the belief as a function of the history $h_t=(o_0,a_0...o_t)$, thanks to the recurrent nature of the agent architecture.

Finally, we calculate value functions $v_\theta(b_t)$ and policy $\pi_\theta(a_t|b_t)$ based on the belief using MLP heads.
Further, all the algorithms are built upon VMPO with a similar reward normalization scheme as in RND \citep{burda2018exploration} and we normalize the raw rewards by an EMA estimate of its standard deviation. See Appendix A.3 in the work by \citet{guo2022byol} for the exact procedure.
For DM-Hard8 environments, rewards and bonuses are normalized separately before combining them using $\lambda_b$. 
That is, let $R_t$ and $\text{Bonus}_t$ be the normalized reward and bonus, then the effective reward for timestep $t$ is,
$R_t + \lambda_b  \,\,\text{Bonus}_t,$
where $\lambda_b$ is a hyper-parameter.

When an algorithm makes use of additional representation loss $\text{Rep}_{\text{loss}}$ then it is combined with the $\text{RL}_{\text{loss}}$ as following,
$\text{RL}_{\text{loss}} + w_\text{loss} \text{Rep}_{\text{loss}},$
where $w_\text{loss}$ is a hyper-parameter, and $\text{RL}_{\text{loss}}$ is the loss provided by the baseline VMPO for learning the policy and critic.

\paragraph{Baseline VMPO.} VMPO is a model-free RL algorithm \citep{song2019v} that achieves state-of-the-art performance in a number of challenging RL benchmarks. At its core, VMPO is a policy optimization algorithm. Importantly, the construction of VMPO updates are based on a number of prior inventions in the policy optimization literature, such as trust region optimization \citep{schulman2015trust,schulman2017proximal}, regression-based updates and adjusting trust regions based on primal-dual optimizations \citep{abdolmaleki2018maximum} and multi-step estimation of advantage functions \citep{espeholt2018impala}. We refer readers to \citep{song2019v} for a complete description of the algorithm and its hyper-parameters. The original VMPO algorithm adopts pixel-control as an auxiliary objective \citep{beattie2016deepmind} in the DM-Lab-30 experiments; we do not apply such an auxiliary loss in our experiment to avoid confounding effects. Further, there is no exploration bonus. That is, $\lambda_b = w_\text{loss} = 0$.

\paragraph{De-Rex.} 
\begin{figure}[t]
    \centering
    \includegraphics[width=0.7\textwidth]{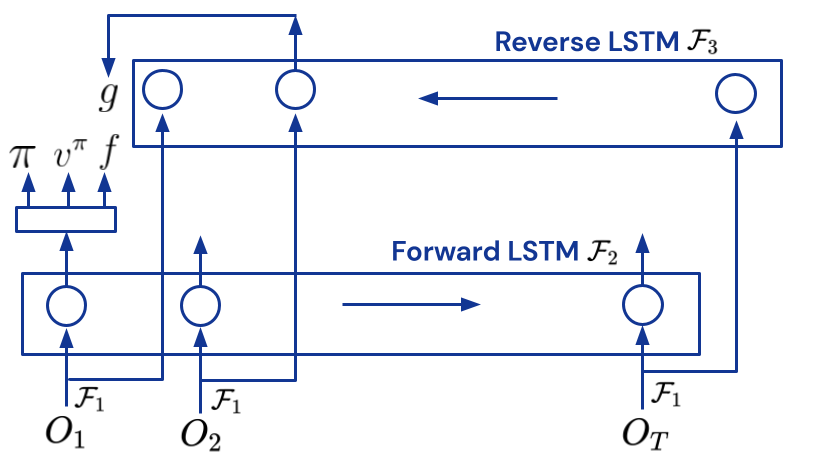}
    \caption{Recall that observation $O_t$ contains the previous action $A_{t-1}$. Let $\mathcal F_1(O_i)$ be the embedding for the observation at timestep $i$.
Let the belief state $b_t = \mathcal F_2\left(\mathcal F_1(O_0), \mathcal F_1(O_1), ... ,\mathcal F_1(O_t) \right)$ obtained using a forward LSTM.
Let $\mathcal F_3$ be a reverse LSTM, such that $ b_t' = \mathcal F_3(\mathcal F_1(O_T), \mathcal F_1(O_{T-1}), ..., \mathcal F_1(O_t))$. Then  $f_{\theta_1}(h_t) \in \mathbb R^k$ in \eqref{eqn:fullPSR} is defined to be $f_{\theta_1}(b_t)$, and $g_{\theta_2}(\tau_t)$ is defined to be $g_{\theta_2}(b_{t+1}')$.
}
    \label{fig:arch}
\end{figure}

This is the proposed method that provides a unified way to learn representation and also obtain pseudo counts for exploration.
\textit{De-Rex representation} only employs the representation learning loss as an auxiliary task, and there is no exploratory bonus, i.e., $\lambda_b=0$.
The representation loss $\text{Rep}_{\text{loss}} = \mathcal L({\bf \Sigma}(\theta))$ is based on \eqref{eqn:loss} and is combined with the VMPO baseline.
For implementing \eqref{eqn:fullPSR}, we base the functions $f_{\theta_1}$ and $g_{\theta_2}$ on forward and backward LSTMs, as illustrated in  Figure \ref{fig:arch}.
We observed that defining the representation loss as an auxiliary loss on a separate projection/transformation of belief $b_t$ is more beneficial than directly applying the loss on $b_t$, which is being used directly by the policy and the value functions.

In comparison, \textit{De-Rex Exploration} employs both representation learning loss and the exploratory bonus.
Here the $\text{Bonus}_t$ corresponds to the norm of the representation $f_{\theta_1}(h_t) \in \mathbb R^k$ as discussed in Section \ref{sec:PSR}.
In Tables \ref{tab:dmlabderex} and \ref{tab:dmhardderex} we provide more details about the hyper-parameters.

\begin{table}[h]
    \centering
    \begin{tabular}{c|c}
        Hyper-parameter & Value  \\
        \hline
        \\
        $\lambda_r$ & 10.0 (Tuned in smaller experiments)\\
        $\lambda_b$ & (N/A) \\
        $w_\text{loss}$ & 0.01 (Hypersweep $\{1, 0.1, 0.01, 0.001\}$)\\
        $k$ & 512 (Hypersweep $\{1024, 512, 64\}$) \\
        Batch Size & 256 
    \end{tabular}
    \caption{Hyper-parameters for De-Rex Representation for DMLab30}
    \label{tab:dmlabderex}
\end{table}

\begin{table}[h]
    \centering
    \begin{tabular}{c|c}
        Hyper-parameter & Value  \\
        \hline
        \\
        $\lambda_r$ & 10.0\\
        $\lambda_b$ & 0.001 (Hypersweep $\{1, 0.1, 0.01, 0.001\}$)\\
        $w_\text{loss}$ & 0.01 (Hypersweep $\{1, 0.1, 0.01, 0.001\}$)\\
         $k$ & 64 (Hypersweep $\{512, 256, 64, 8\}$) \\
        Batch Size & 256
    \end{tabular}
    \caption{Hyper-parameters for De-Rex Exploration for DMHard-8}
    \label{tab:dmhardderex}
\end{table}

\paragraph{Laplacian graph drawing \citep{wu2018laplacian}}: 
This is the baseline decomposition method that decomposes the Laplacian associated with the transition matrix.
Let $u$ and $v$ be state and next-state pairs, then their proposed loss is given by
\begin{align}
    \mathcal L_\text{diag} &\coloneqq \frac{1}{nk}\sum_{i=1}^n\sum_{j=1}^k (f_{\theta_1}(u_i)_j - f_{\theta_1}(v_i)_j)^2
    \\
    \mathcal L_\text{off} &\coloneqq \frac{1}{nk^2}\sum_{i=1}^n\sum_{j=1}^k\sum_{l=1}^k (f_{\theta_1}(u_i)_j f_{\theta_1}(u_i)_l - \delta_{j,l}) (f_{\theta_1}(v_i)_j f_{\theta_1}(v_i)_l - \delta_{j,l})
    \\
    \text{Rep}_\text{loss} &\coloneqq  \mathcal L_\text{diag} + \lambda_r \mathcal L_\text{off}
\end{align}
While the original formulation was defined for MDP, where $u$ and $v$ correspond to the state and the next state. However, as we  are dealing with POMDPs, we let the state be the history so far, therefore we set $u = h$ and $v= h'$.
The architecture for the function $f_{\theta_1}$ is the same as the one used for De-Rex.
This Laplacian baseline does not provide any means for exploration and hence $\lambda_b=0$ throughout.
For $\lambda_r$, we searched between $[1,10]$ and found $10$ to work the best. Other hyper-parameters were similar to De-Rex.

\paragraph{Random Network Distillation (RND).} RND \citep{burda2018exploration} provides an exploration strategy that generates an exploration bonus based on the prediction between the current network $\xi_\theta(o_t)\in\mathbb{R}^k$ and a randomly initialized network $\xi_{\theta_0}(o_t)\in\mathbb{R}^k$. Note that $\theta_0$ is randomly initialized and is not trained over time. The current network $\theta$ is trained to minimize the squared error $l_\theta(o_t)=\left\lVert \xi_\theta(o_t) - \xi_{\theta_0}(o_t)\right\rVert_2^2$. Meanwhile, $l_\theta(o_t)$ is also used as an exploration bonus.

\section{Ablation Study for 4-Rooms POMDP Domain }
\label{apx:ablationPOMDP}

In Figures \ref{fig:tabular4room} and \ref{fig:4roomfa} the representations learned and the bonuses obtained were visualized for the fully-observable setting.
In this section, we aim to further extend this qualitative assessment by performing ablations on the partially observable setting.
To restrict the focus on just the representation learning component and the bonuses constructed, we consider a partially observable extension of the 4 rooms domains.
This provides a controlled environment to test the effectiveness and limitations of the proposed De-Rex method.

\textbf{4 rooms (POMDP): } In this domain, observation $O_t$ at time $t$ corresponds to an  image of size $(30, 30, 3)$ that has the top-down view of the domain (similar to Figure \ref{fig:4roomfa} (B)), albeit that with probability $p$ the agent's marker (red dot) is completely hidden.
Therefore, when the marker is hidden, the observation $O_t$ does not provide any information about the location of the agent, and the location needs to be inferred using the history.
Naturally, a higher value of $p$ induces a higher degree of partial observability, and $p=0$ indicates full observability (the proposed De-Rex method does not assume full observability and still uses history-dependent representations).
The action set corresponds to the four directions of movement, and the horizon length is $250$.

Therefore, this domain provides a controlled setup to understand the method across different degrees of partial observability.
In the following, we provide qualitative analysis of the representations and bonuses learned by our method, across different values of $p$.
Data was collected using a uniform random policy.

\begin{figure}
    \centering
    \includegraphics[width=0.24\textwidth]{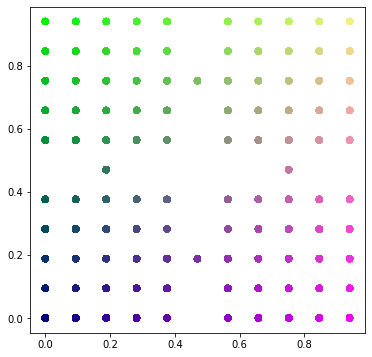}
    \includegraphics[width=0.24\textwidth]{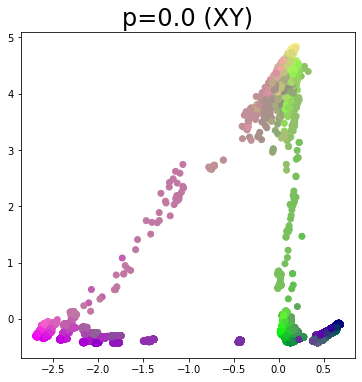}
    \includegraphics[width=0.24\textwidth]{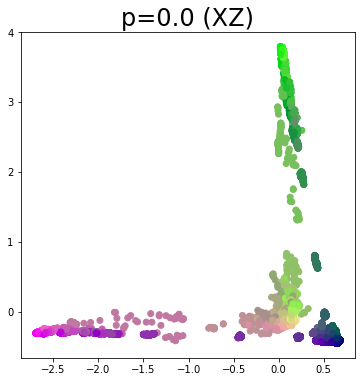}
    \includegraphics[width=0.24\textwidth]{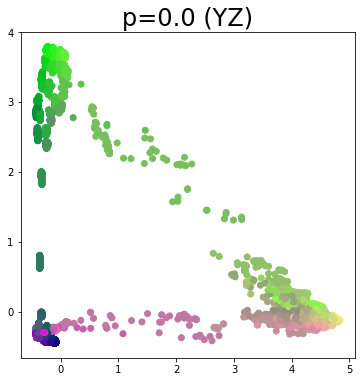}
    \\
    \hspace{0.24\textwidth}
    \includegraphics[width=0.24\textwidth]{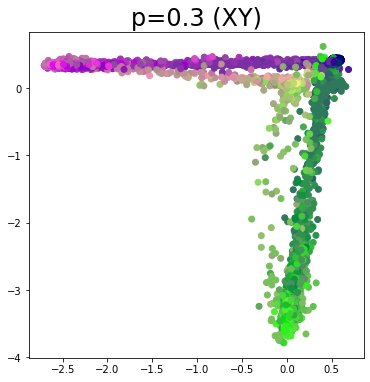}
    \includegraphics[width=0.24\textwidth]{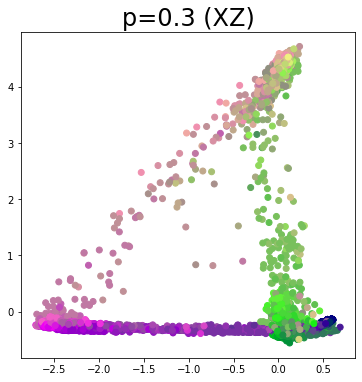}
    \includegraphics[width=0.24\textwidth]{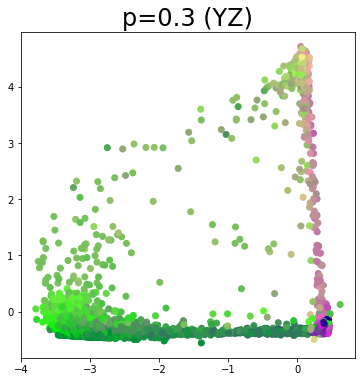}
    \caption{This plot is analogous to Figure \ref{fig:tabular4room}(B) and  Figure \ref{fig:tabular4room}(C) and extends the visualization technique to the POMDP setting. We plot the representations for all possible (partial) histories in the data collected \textbf{Top-Left} Ground truth color to represent the underlying true state $s$. \textbf{Top-row} Illustration of the learned three-dimensional representations for the histories in the 4-rooms (POMDP) setting where $p=0.0$. To view 3D representations, we plot slices (XY, XZ, YZ) of the axes. \textbf{Bottom-row}  Illustration of the learned three-dimensional representations for the histories in the 4-rooms (POMDP) setting where $p=0.3$. To view 3D representations, we plot slices (XY, XZ, YZ) of the axes. \textbf{Legend:} (Partial) histories that have $s$ as the true underlying state at the end of their sequence of observations have their representations  share the same color as the reference color for $s$. Therefore, as there are multiple histories that end at a given state, for a single color of the true state there are multiple learned representations with the same color. }
    \label{fig:POMDP_rep}
\end{figure}
\paragraph{Representations:}
In the fully observable case, the proposed method used a representation dimension of 2 (Figure \ref{fig:4roomfa}). For the partially observable setting, we found representation dimensions of $3$ and $4$ to be useful for $p<0.5$ and $p>0.5$ respectively. In Figure \ref{fig:POMDP_rep} we illustrate the effectiveness of learning representations using the De-Rex approach. Architecture for our $f$ and $g$ function follows the one illustrated in Figure \ref{fig:arch}, where the observation are individually processed via a conv-net, before passing it to a forward and a reverse LSTM to encode histories and futures, respectively. Outputs of these LSTMs are then subjected to a one-layer MLP to form $f$ and $g$.

A sufficient statistic for any history is the ground truth state at the end of the history. It can be observed that for the histories that are ending at similar ground truth state (unknown to the agent), the proposed De-Rex method is still able to provide similar representations for those histories. 
\paragraph{Bonuses:} In Figure \ref{fig:POMDP_bonus} we plot the bonuses obtained using the De-Rex procedure. It can be observed that when partial observability is not extreme, the proposed method is able to provide pseudo-counts for the underlying true state even when the true state is unknown. This can be attributed to the fact that De-Rex provides a joint mechanism for both learning a representation to capture the underlying sufficient statistic, and for estimating pseudo-counts using those represetnations.
\begin{figure}
    \centering
    \includegraphics[width=0.24\textwidth]{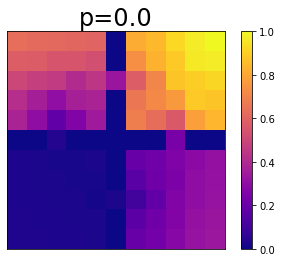}
    \includegraphics[width=0.24\textwidth]{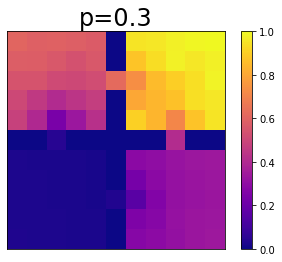}
    \\
    \includegraphics[width=0.24\textwidth]{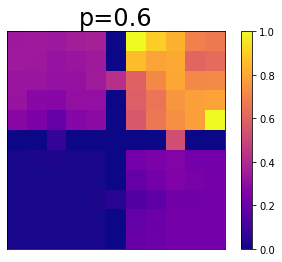}
    \includegraphics[width=0.24\textwidth]{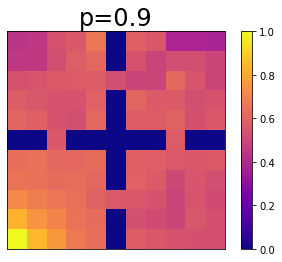}
    \caption{This plot is analogous to Figure \ref{fig:tabular4room}(D) and  Figure \ref{fig:4roomfa}(D) and extends the visualization technique to the POMDP setting. The ground truth state visitation is the same as in Figure \ref{fig:tabular4room}(A). \textbf{Top-left} Bonuses constructed using learned representations of the history when $p=0.0$. \textbf{Top-right} Bonuses constructed using learned representations of the history when $p=0.3$. \textbf{Bottom-left} Bonuses constructed using learned representations of the history when $p=0.6$. \textbf{Bottom-right} Bonuses constructed using learned representations of the history when $p=0.9$. \textbf{Legend:} Brighter color indicates higher value bonus. Colors for each of the true state $s$ is computed based on the average of bonuses from all (partial) histories that have $s$ as the true underlying state at the end of their sequence of observations.   }
    \label{fig:POMDP_bonus}
\end{figure}
\paragraph{Belief Distribution: } 
While Figure \ref{fig:POMDP_rep} provides one way to assess the quality of the learned distributions, it can be used to visualize representations in higher dimensions.
Therefore, in Figures \ref{fig:POMDP_belief0}, \ref{fig:POMDP_belief3}, \ref{fig:POMDP_belief6}, and \ref{fig:POMDP_belief9} we provide an alternate way to assess the quality of the learned representations.

For these figures, we learned a classification model (using a single-layer neural network) that aims to classify the true underlying state at the end of the observation sequence of a given (partial) history using the representation of that (partial) history. 
This provides us with a proxy for how well can the learned representations be used as to decode beliefs over the underlying true state.

As expected, when $p=0.0$ there is no partial observability and agent learns to use (ignore) history to obtain representations which can be used to precisely decode the belief over the ground truth location of the agent.
As the degree of partial observability increases, it can be observed that the belief distribution get more diffused as there is less certainty about the agent's location. Nonetheless, De-Rex provides representations that can be used to decode true underlying states within close proximity. 

\clearpage
\begin{figure}[h]
    \centering
    \includegraphics[width=0.1\textwidth]{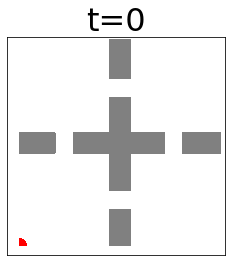}
    \includegraphics[width=0.1\textwidth]{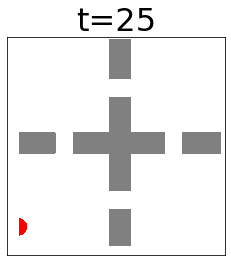}
    \includegraphics[width=0.1\textwidth]{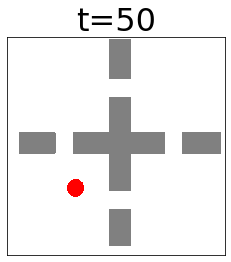}
    \includegraphics[width=0.1\textwidth]{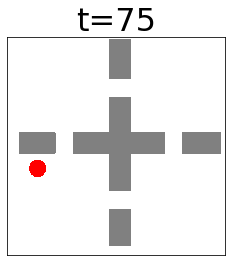}
    \includegraphics[width=0.1\textwidth]{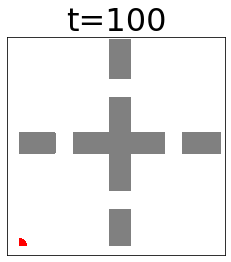}
    \includegraphics[width=0.1\textwidth]{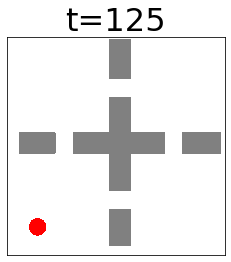}
    \includegraphics[width=0.1\textwidth]{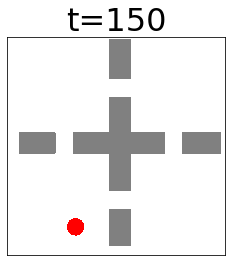}
    \includegraphics[width=0.1\textwidth]{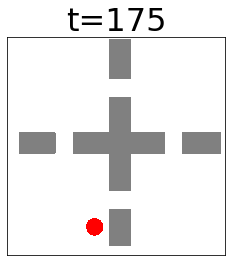}
    \includegraphics[width=0.1\textwidth]{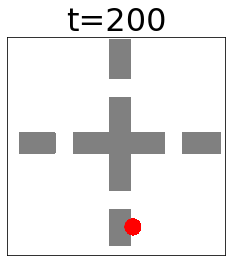}
    \\
    \includegraphics[width=0.1\textwidth]{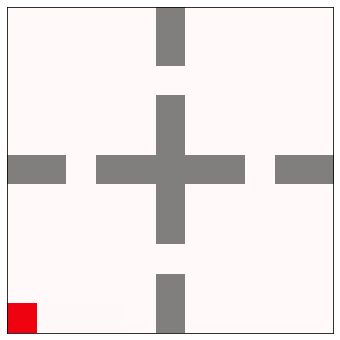}
    \includegraphics[width=0.1\textwidth]{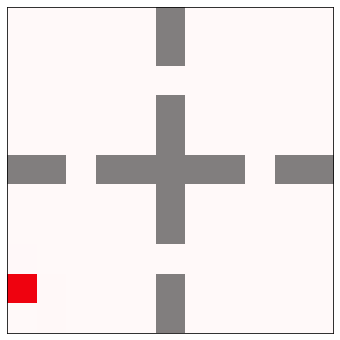}
    \includegraphics[width=0.1\textwidth]{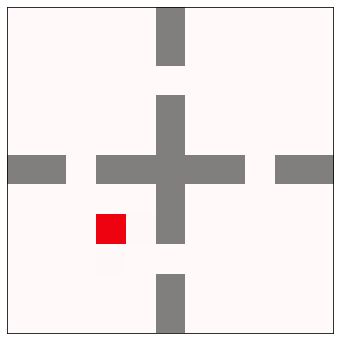}
    \includegraphics[width=0.1\textwidth]{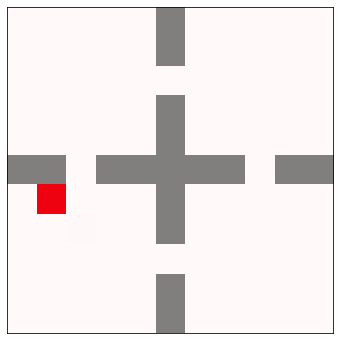}
    \includegraphics[width=0.1\textwidth]{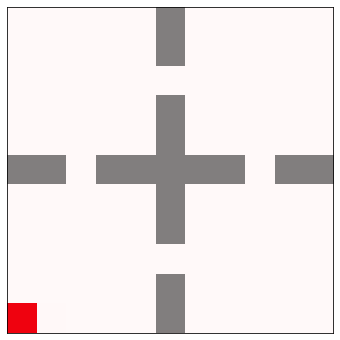}
    \includegraphics[width=0.1\textwidth]{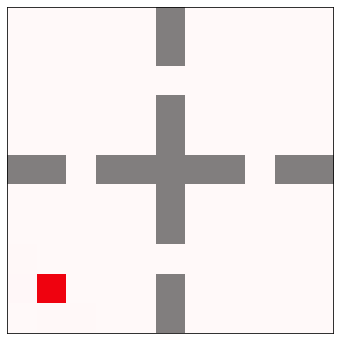}
    \includegraphics[width=0.1\textwidth]{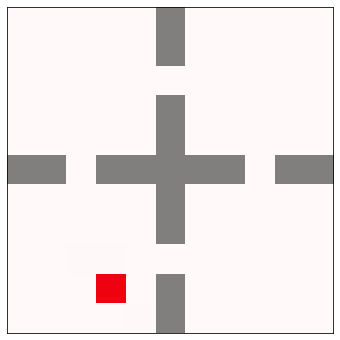}
    \includegraphics[width=0.1\textwidth]{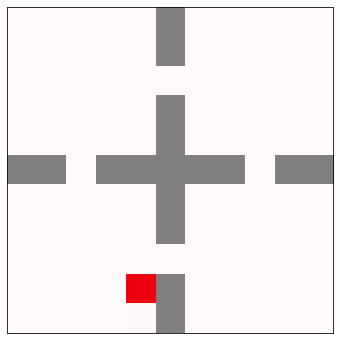}
    \includegraphics[width=0.1\textwidth]{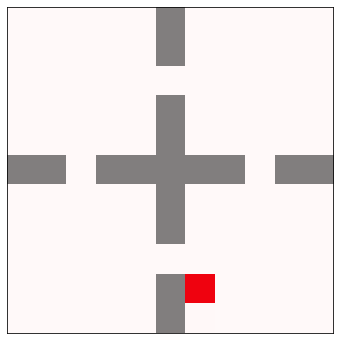}
    \caption{Beliefs for a given trial under the POMDP setting with $p=0.0$. }
    \label{fig:POMDP_belief0}
\end{figure}

\begin{figure}[h]
    \centering
    \includegraphics[width=0.1\textwidth]{images/p3t1.png}
    \includegraphics[width=0.1\textwidth]{images/p3t2.png}
    \includegraphics[width=0.1\textwidth]{images/p3t3.png}
    \includegraphics[width=0.1\textwidth]{images/p3t4.png}
    \includegraphics[width=0.1\textwidth]{images/p3t5.png}
    \includegraphics[width=0.1\textwidth]{images/p3t6.png}
    \includegraphics[width=0.1\textwidth]{images/p3t7.png}
    \includegraphics[width=0.1\textwidth]{images/p3t8.png}
    \includegraphics[width=0.1\textwidth]{images/p3t9.png}
    \\
    \includegraphics[width=0.1\textwidth]{images/p3b1.png}
    \includegraphics[width=0.1\textwidth]{images/p3b2.png}
    \includegraphics[width=0.1\textwidth]{images/p3b3.png}
    \includegraphics[width=0.1\textwidth]{images/p3b4.png}
    \includegraphics[width=0.1\textwidth]{images/p3b5.png}
    \includegraphics[width=0.1\textwidth]{images/p3b6.png}
    \includegraphics[width=0.1\textwidth]{images/p3b7.png}
    \includegraphics[width=0.1\textwidth]{images/p3b8.png}
    \includegraphics[width=0.1\textwidth]{images/p3b9.png}
    \caption{Beliefs for a given trial under the POMDP setting with $p=0.3$. }
    \label{fig:POMDP_belief3}
\end{figure}

\begin{figure}[h]
    \centering
    \includegraphics[width=0.1\textwidth]{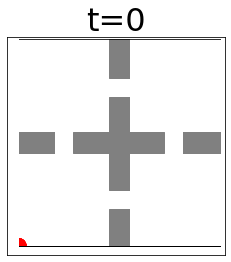}
    \includegraphics[width=0.1\textwidth]{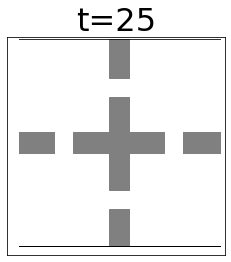}
    \includegraphics[width=0.1\textwidth]{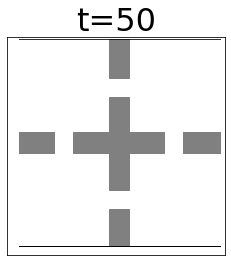}
    \includegraphics[width=0.1\textwidth]{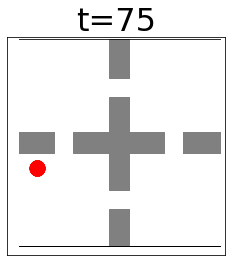}
    \includegraphics[width=0.1\textwidth]{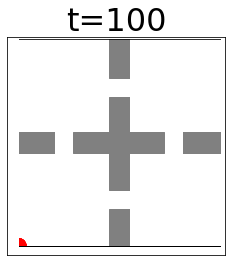}
    \includegraphics[width=0.1\textwidth]{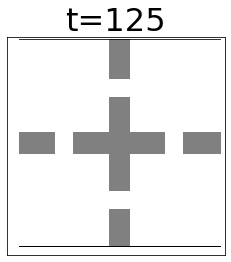}
    \includegraphics[width=0.1\textwidth]{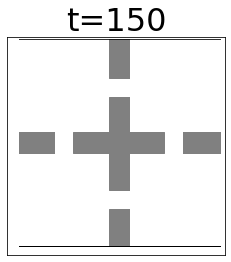}
    \includegraphics[width=0.1\textwidth]{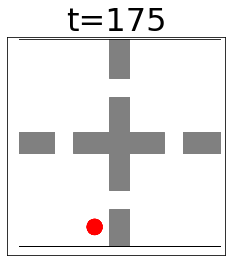}
    \includegraphics[width=0.1\textwidth]{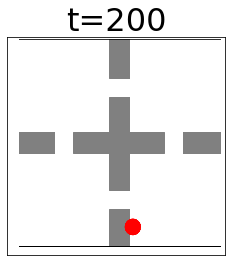}
    \\
    \includegraphics[width=0.1\textwidth]{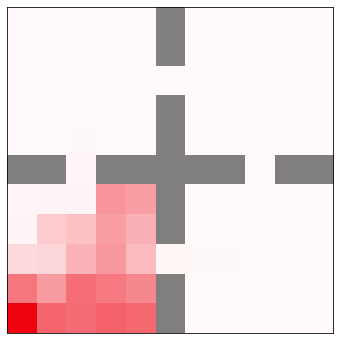}
    \includegraphics[width=0.1\textwidth]{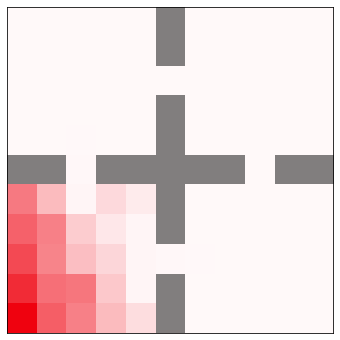}
    \includegraphics[width=0.1\textwidth]{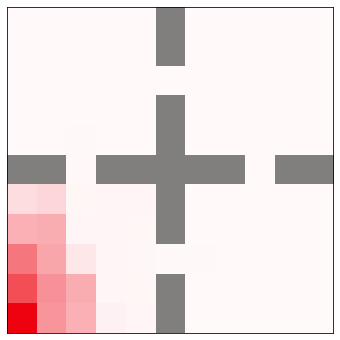}
    \includegraphics[width=0.1\textwidth]{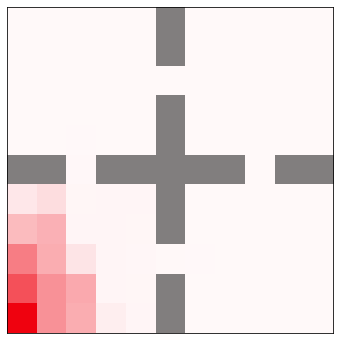}
    \includegraphics[width=0.1\textwidth]{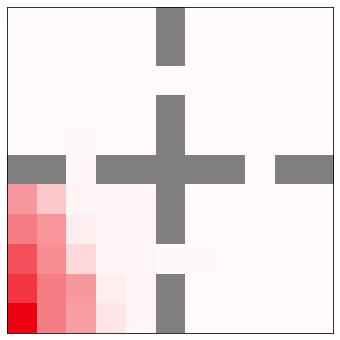}
    \includegraphics[width=0.1\textwidth]{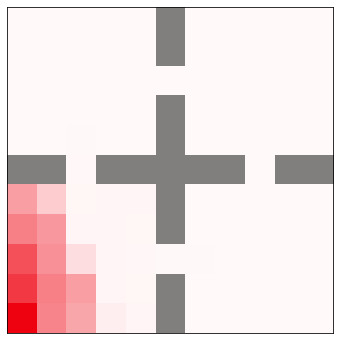}
    \includegraphics[width=0.1\textwidth]{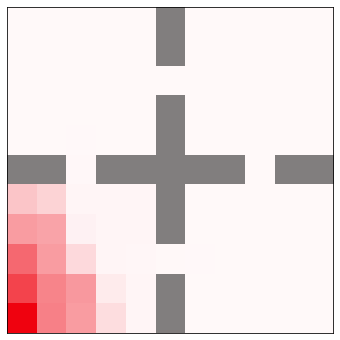}
    \includegraphics[width=0.1\textwidth]{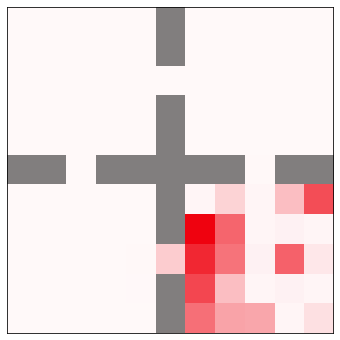}
    \includegraphics[width=0.1\textwidth]{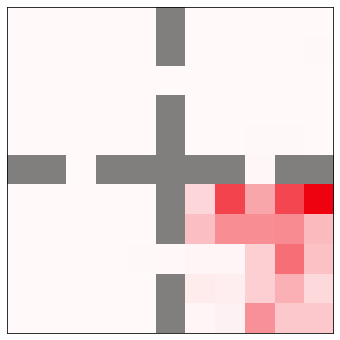}
    \caption{Beliefs for a given trial under the POMDP setting with $p=0.6$. }
    \label{fig:POMDP_belief6}
\end{figure}
\begin{figure}[H]
    \centering
    \includegraphics[width=0.1\textwidth]{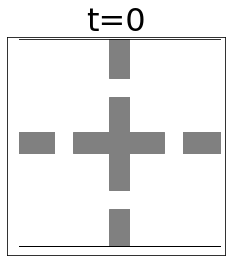}
    \includegraphics[width=0.1\textwidth]{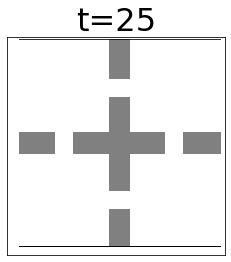}
    \includegraphics[width=0.1\textwidth]{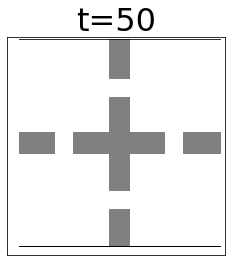}
    \includegraphics[width=0.1\textwidth]{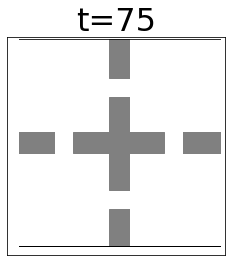}
    \includegraphics[width=0.1\textwidth]{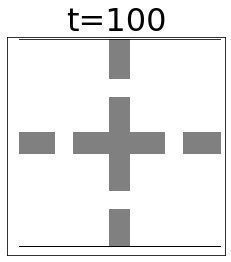}
    \includegraphics[width=0.1\textwidth]{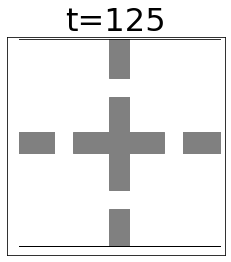}
    \includegraphics[width=0.1\textwidth]{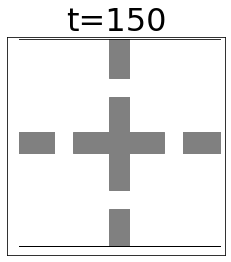}
    \includegraphics[width=0.1\textwidth]{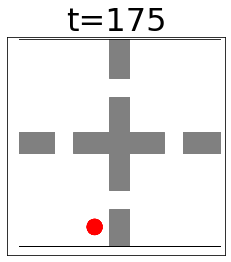}
    \includegraphics[width=0.1\textwidth]{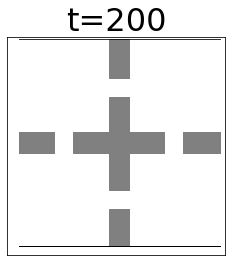}
    \\
    \includegraphics[width=0.1\textwidth]{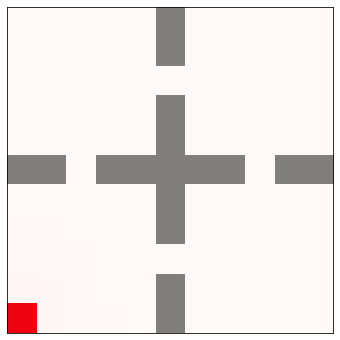}
    \includegraphics[width=0.1\textwidth]{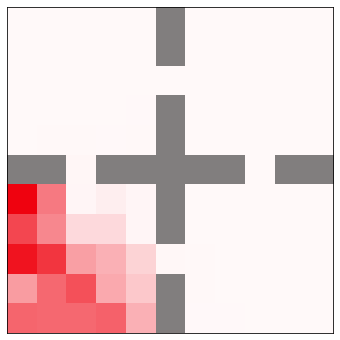}
    \includegraphics[width=0.1\textwidth]{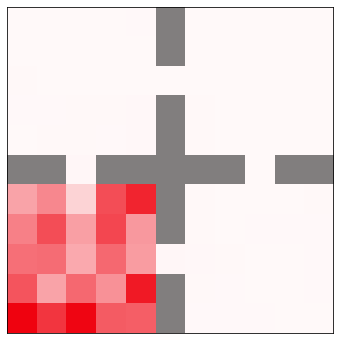}
    \includegraphics[width=0.1\textwidth]{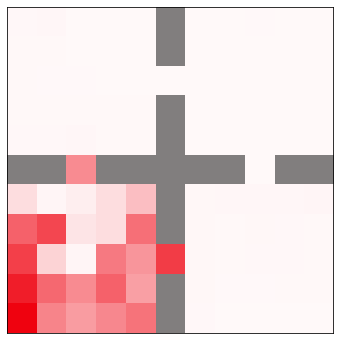}
    \includegraphics[width=0.1\textwidth]{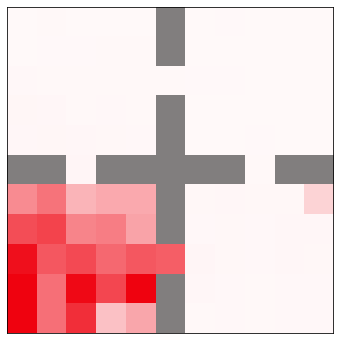}
    \includegraphics[width=0.1\textwidth]{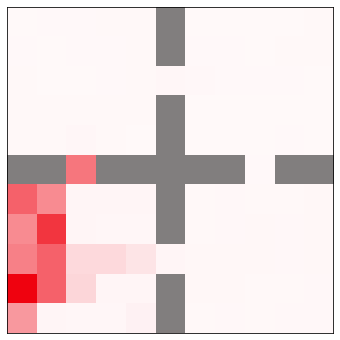}
    \includegraphics[width=0.1\textwidth]{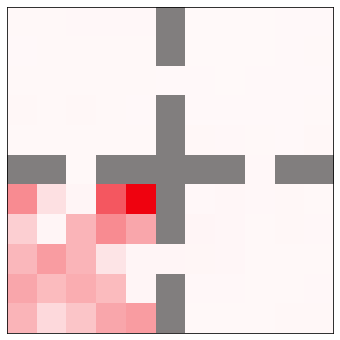}
    \includegraphics[width=0.1\textwidth]{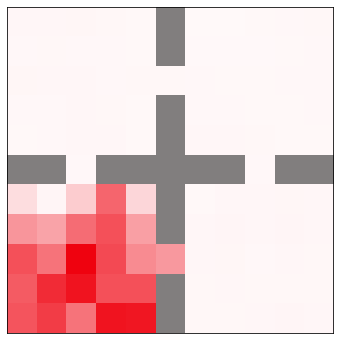}
    \includegraphics[width=0.1\textwidth]{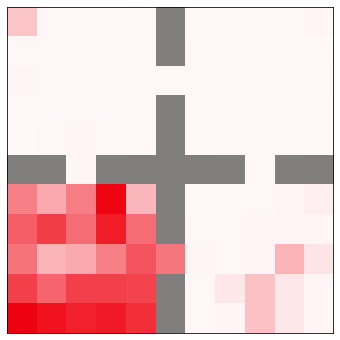}
    \\
    \includegraphics[width=0.3\textwidth]{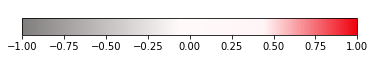}
    \caption{Beliefs for a given trial under the POMDP setting with $p=0.9$.  Top-rows illustrate the snapshots of the observations that were given to the agent at timesteps $t=\{0,25,50,75,100,125,150,175,200\}$. The bottom rows denote the belief decoded from the representation of the history till timestep $t$. See text for more details.  \textbf{Legend:} All plots on this page share the same legend. For the bottom rows, the deeper red color indicates a higher degree of belief. For the top row, the red color indicates the location of the agent. Higher $p$ indicates a higher degree of partial-observability and thus more frames have the agent missing.  }
    \label{fig:POMDP_belief9}
\end{figure}

\clearpage
\section{More Large-Scale Results}
\label{sec:moreres}

\subsection{DMLab30}

We measure the performance of each task in DMLab30 environment with the human-normalized performance
$(z_i - u_i)/(h_i - u_i)$ with $1 \leq i \leq 30$, where $u_i$ and $h_i$ are
the raw score performance of random policy and humans, and $z_i$ is the score obtained by the agent. A normalized score of 1 indicates that the agent performs as well as humans on the task.

In Figure \ref{fig:dmlab30} we plot the human normalized score aggregated across all the domains and in Figure \ref{fig:dmlab30_all} we plot the learning curves for each of the tasks individually.
It can be observed in both that when dealing with complex environments, the baseline decomposition-based method fails to provide significant improvement.
In comparison, the proposed De-Rex method has the potential to perform much better.
\begin{figure}
    \centering
    \includegraphics[width=0.4\textwidth]{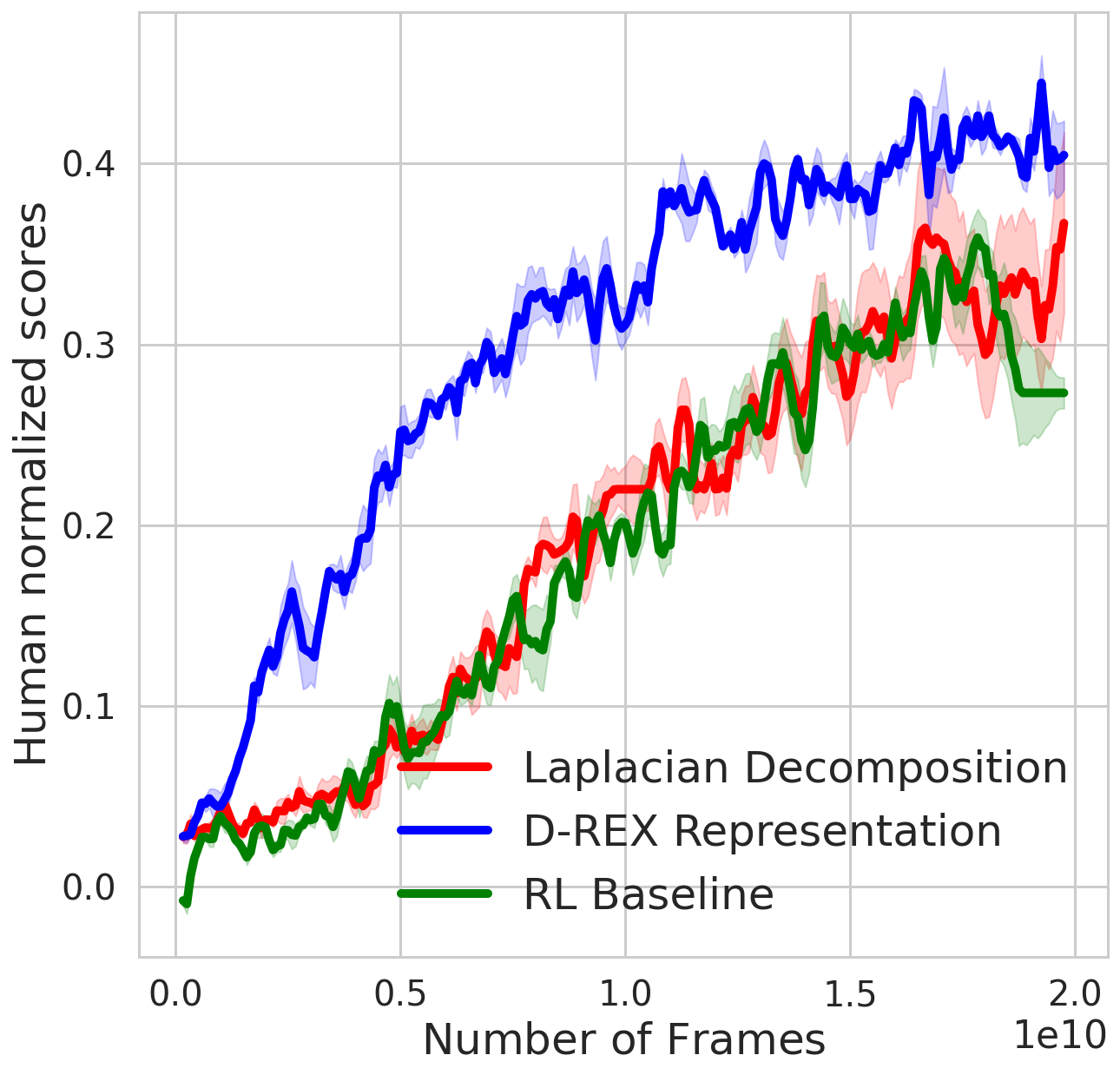}
    \caption{Human normalized scores aggregated across all the 30 tasks on DMLab30 using 3 seeds}
    \label{fig:dmlab30}
\end{figure}

\subsection{DM-Hard-8}
 
To better understand how much can just learning representations help with hard exploration tasks,
we perform ablation studies on the DM-Hard-8 tasks for the SVD representation and Laplacian decomposition method.
For both the methods we plot the results for different values of (a) $w_\text{loss}$ and (b) dimensions of the output of $f_{\theta_1}$ on which the decomposition is done.

As it can be observed from the plots in Figure \ref{fig:dmhard8_param_study} and \ref{fig:dmhard8_param_study2}, just by using representation learning alone, D-Rex is able to solve at most 2 tasks, and the Laplacian decomposition method can solve only one task.
This illustrates the importance of constructing good pseudo-counts to aid exploration, as evident in Figure \ref{fig:dmhard_base} by the De-Rex Exploration method.

\begin{figure*}
    \centering
    \includegraphics[width=1\textwidth]{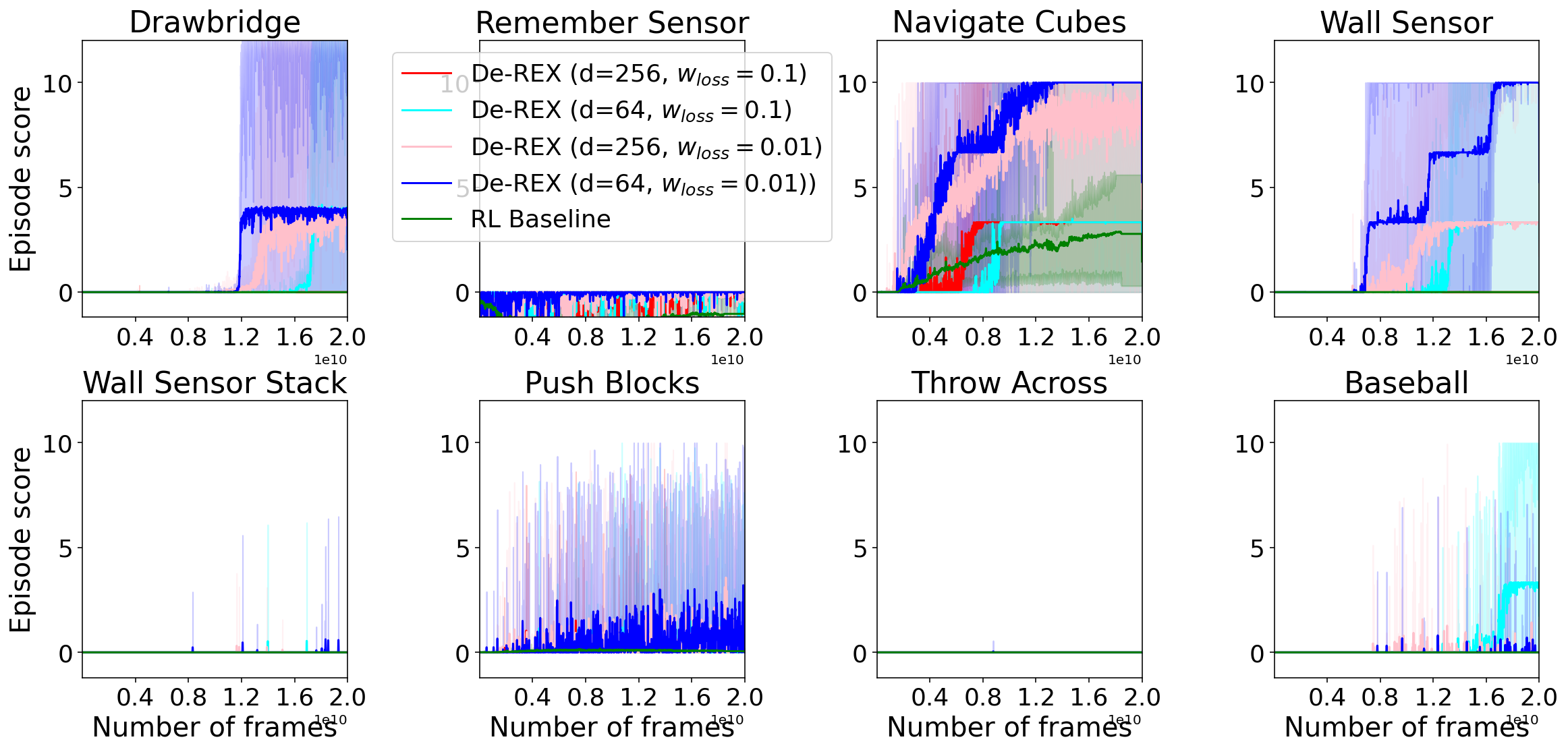}
    \caption{Parameter Study: De-Rex Representation. Sensitivity on weighting the SVD loss in the representation ($w_{loss}$) and the dimension of the SVD decomposition ($d$) }
    \label{fig:dmhard8_param_study}
\end{figure*}

\begin{figure*}
    \centering
    \includegraphics[width=1\textwidth]{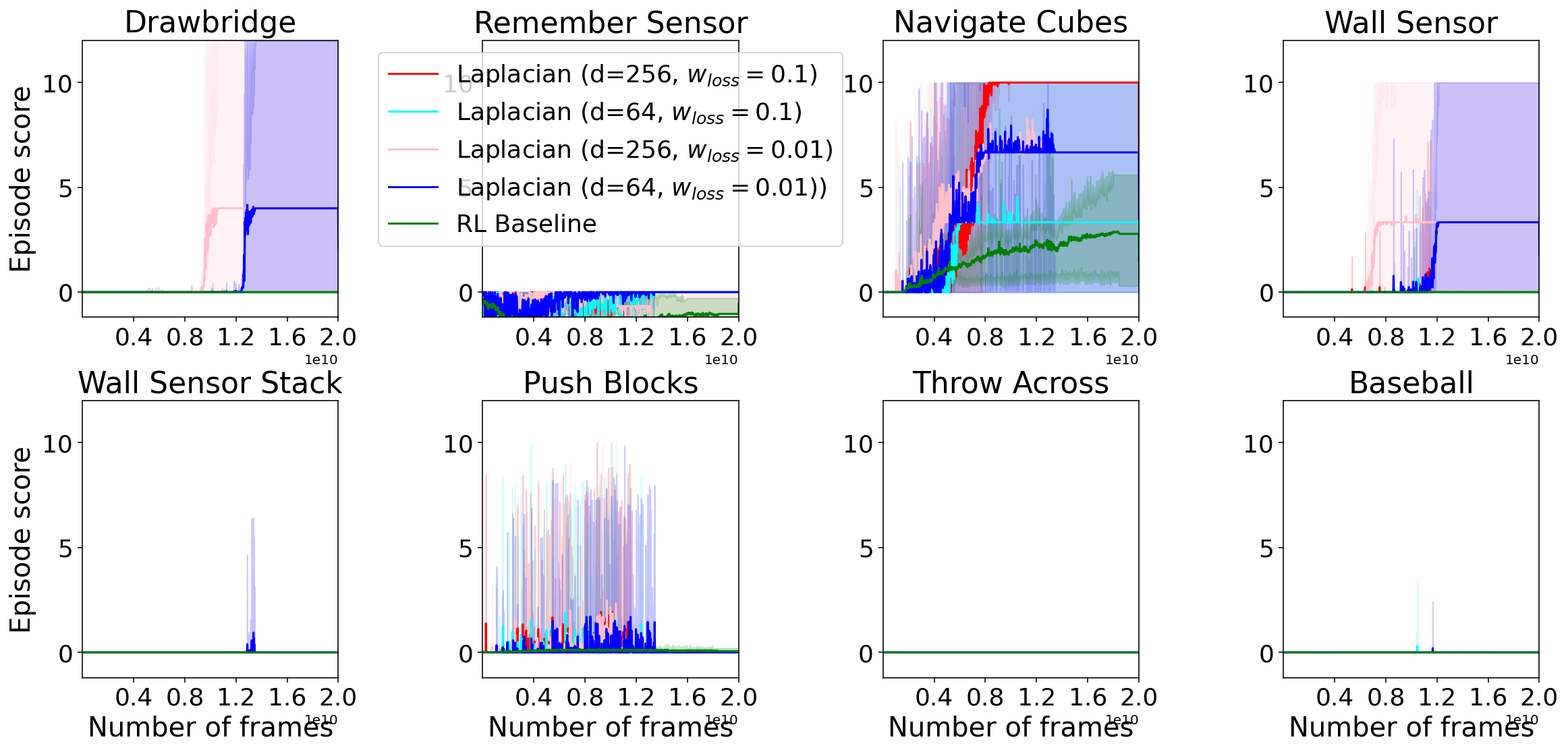}
    \caption{Parameter Study: Lapacian Decomposition Representation. Sensitivity on weighting the decomposition loss in the representation ($w_{loss}$) and the dimension of the SVD decomposition ($d$) }
    \label{fig:dmhard8_param_study2}
\end{figure*}

\begin{figure*}
    \centering
    \includegraphics[width=0.6\textwidth]{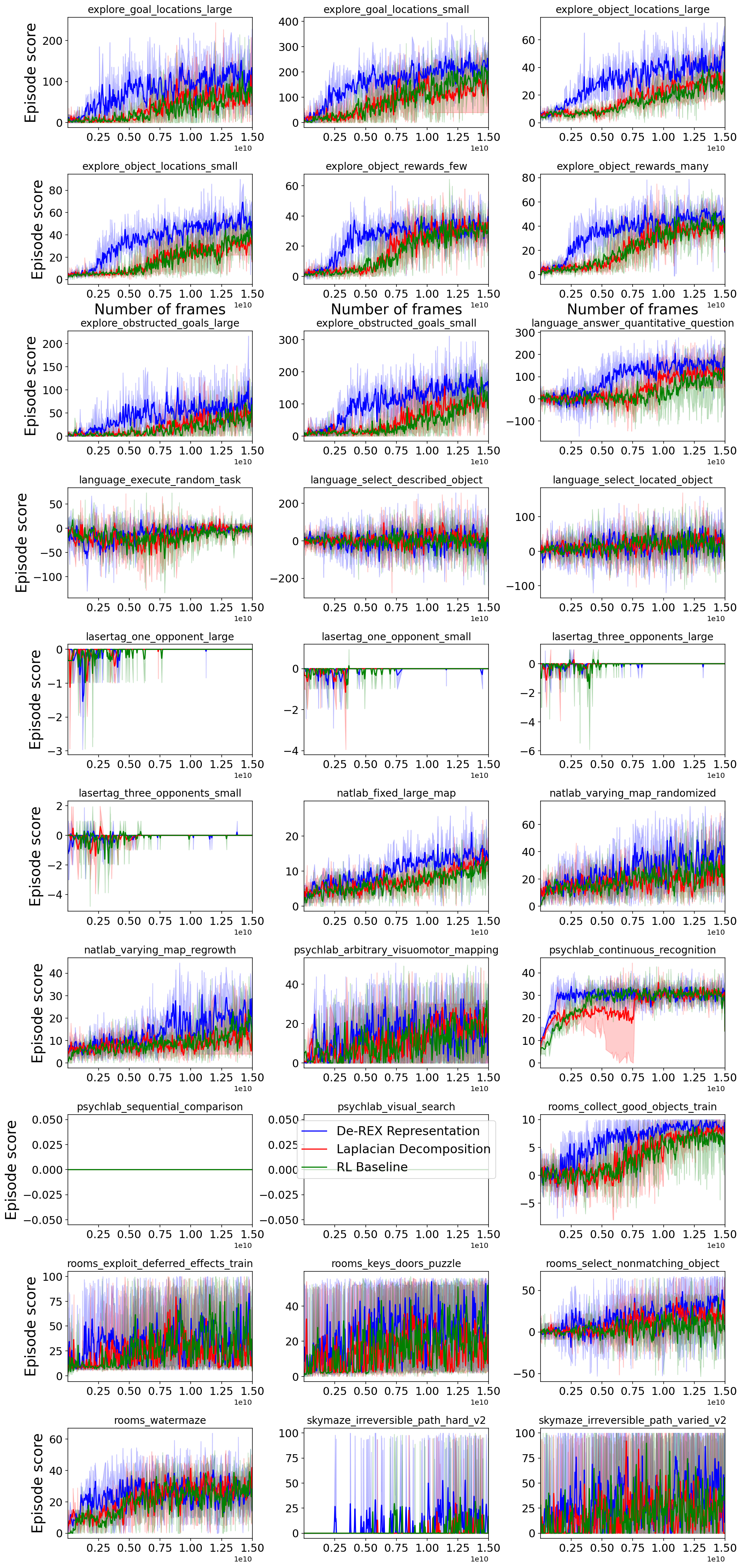}
    \caption{Individual learning curves for De-Rex Representation, Laplacian decomposition, and the VMPO baseline methods on the DMLab30 tasks. }
    \label{fig:dmlab30_all}
\end{figure*}

\end{document}